\documentclass{article}
\newcommand{\model}{\textsc{BlockSearch}}

\usepackage[preprint]{neurips_2026}

\usepackage[utf8]{inputenc} % allow utf-8 input
\usepackage[T1]{fontenc}    % use 8-bit T1 fonts
\usepackage{hyperref}       % hyperlinks
\usepackage{url}            % simple URL typesetting
\usepackage{booktabs}       % professional-quality tables
\usepackage{amsfonts}       % blackboard math symbols
\usepackage{amsmath}        % align, equation*, etc.
\usepackage{amssymb}        % \square, \blacksquare
\usepackage{amsthm}         % proposition / proof environments
\usepackage{dsfont}
\newtheorem{proposition}{Proposition}

\usepackage{algorithm}% http://ctan.org/pkg/algorithms
\usepackage{algpseudocode}% http://ctan.org/pkg/algorithmicx
\usepackage{nicefrac}       % compact symbols for 1/2, etc.
\usepackage{microtype}      % microtypography
\usepackage{xcolor}         % colors
\usepackage{cleveref}       % \cref / \Cref
\crefname{section}{\S}{\S\S}
\Crefname{section}{\S}{\S\S}
\crefformat{section}{\S#2#1#3}
\Crefformat{section}{\S#2#1#3}

\usepackage{graphicx}       % \includegraphics
\graphicspath{{plots/}}
\usepackage{wrapfig}        % wrap text around figures
\usepackage{pgfplots}
\pgfplotsset{compat=1.18}
\usepgfplotslibrary{groupplots}
\usepackage{enumitem}

\title{
Can Language Models Actually Retrieve In-Context? \\ Drowning in Documents at Million Token Scale
}

\usepackage[textsize=tiny]{todonotes}

\newcommand{\myskip}[1]{}

\makeatletter
\newcommand*\myfontsize{%
  \@setfontsize\myfontsize{8}{9}%
}
\makeatother

\let\oldparagraph\paragraph
\renewcommand{\paragraph}[1]{\vspace{-.7em}
\oldparagraph{#1}
}

\author{%
  Siddharth Gollapudi\thanks{Correspondence to \texttt{sgollapu@berkeley.edu}.} \\
  UC Berkeley \\
  \And
  Nilesh Gupta \\
  UT Austin \\
  \And
  Prasann Singhal \\
  UC Berkeley \\
  \And
  Sewon Min \\
  UC Berkeley \\
}

\begin{document}

\maketitle

\begin{abstract}

Language models (LMs) raise an intriguing alternative to vector-based retrieval: conditioning on an in-context corpus and directly generating a relevant answer.
However, prior work has largely focused on proprietary systems or the smaller-scale reranking task, leaving corpus-scale in-context retrieval largely unexplored.
In this work, we present the first systematic study of in-context retrieval on two scales practical retrievers demand: \emph{million-token} corpora and \emph{length-generalization} far beyond training-time sizes.
We first introduce \model{}, a 0.6B LM retriever whose architectural and training modifications improve over prior LM baselines and length-generalize up to $10\times$ beyond its training regime. Nevertheless, retrieval still collapses under more extreme extrapolation.
We trace this failure to an \emph{attention dilution} effect: as the corpus grows, irrelevant documents dominate the softmax denominator, reducing the normalized mass on the gold document even when its pre-softmax score stays high. Motivated by this analysis, we introduce length-aware adjustments to the attention softmax and document-level sparse attention.
With these modifications, at the million-token scale, our model matches dense retrieval on widely studied benchmarks (e.g, MS MARCO and NQ), while outperforming the concurrent model MSA despite being $7\times$ smaller. Furthermore, it significantly outperforms dense retrieval on tasks requiring entirely different notions of similarity, such as LIMIT, achieving a $3\times$ higher score.
Together, our results position in-context retrieval a promising alternative to classical retrieval while emphasizing attention control under extreme context growth as a new challenge.
\end{abstract}
\section{Introduction}
\label{sec:intro}

Retrieval---identifying the relevant document(s) in a corpus $\{D_1,\dots,D_N\}$ for a query $Q$---has long been dominated by vector-based methods~\cite{karpukhin2020dpr}. Recent advances in language modeling (LMs) suggest a more radical alternative: casting retrieval as \emph{conditional} generation, where the model directly decodes the identifier of the relevant document from the corpus in-context~\citep{lee2024loft}.
Such \emph{in-context retrieval} (ICR) could collapse retrieval and generation into a single model, replacing the two-stage retrieval-augmented generation pipeline \citep{lewis2020rag} and enabling complex retrieval behavior beyond inner-product similarity~\citep{weller2025limit}. In other words, rather than relying on external harnesses (retrievers), the model itself decides which parts of its context are relevant to the task at hand.
However, this possibility remains largely untested at realistic scale: prior work either relies on proprietary systems without controlled evaluation~\citep{lee2024loft} or studies reranking over small candidate sets~\citep{gupta2025blockrank,qiu2025icr2}, rather than true corpus-scale retrieval. As a result, it remains unclear whether LMs can reliably retrieve relevant documents from large in-context corpora \citep{bai2025longbenchv2,qiu2025icr2,hong2025context}.

ICR differs fundamentally from standard long-context modeling. Rather than processing a single coherent sequence, it operates over large collections of independent documents, creating both opportunities and challenges. On one hand, document-level independence naturally enables parallel encoding and caching. On the other hand, effective retrieval requires two capabilities that current LMs struggle with: \textbf{(C1)} scaling to corpora containing millions of tokens, and \textbf{(C2)} generalizing to corpus sizes far beyond those seen during training. Both are difficult in practice: million-token attention remains computationally expensive, while extrapolation to substantially longer contexts is known to be brittle in modern LMs~\citep{bai2025longbenchv2,nakanishi2025ssmax,leng2026hierarchical}.

To this end, we present the first systematic study of whether an LM can act as a corpus-scale retriever under these conditions. We first introduce \model{} (\cref{sec:method}), a 0.6B-parameter LM retriever that builds on prior ICR architectures using block-sparse attention~\citep{gupta2025blockrank}. For scale and length generalization, it adds randomized per-document identifier codes, in-batch negative training, and an on-policy auxiliary loss that trains on the model's own rolled-out digit prefixes.
%We then compare \model{} against dense retrieval, the standard baseline any practical ICR system must match if it is to subsume the retrieval stage.
On widely studied retrieval benchmarks (e.g., MS MARCO and NQ), \model{} matches a dense retriever at smaller corpus sizes ($95.8\%$ vs.\ $95.5\%$ on MS\,MARCO at $\sim$45k tokens), and maintains non-trivial accuracy beyond the LM baseline, generalizing up to $10\times$ beyond its training context.

%\begin{figure*}[t]
%\centering
%\includegraphics[width=0.85\linewidth, trim= 0 40 0 10]{Figures/Attention dilution — figure.pdf}
% \vspace{-.2em}
%\caption{Attention dilution in action. As the context size grows, the most relevant token to the query becomes lost amongst irrelevant tokens.}
%\label{fig:masks}
% \vspace{-.2em}
%\end{figure*}

%\sewon{Sewon's version}
Despite these improvements, \model{} still collapses at large scale: recall approaches zero near million-token contexts, remaining far below a simple dense retrieval baseline. We trace this failure not to a ranking collapse, but to an \textbf{\emph{attention dilution}} effect, a degradation that has also been observed in other settings~\citep{nakanishi2025ssmax, velickovic2024softmax, barbero2024glasses, duvvuri2026lucid}.
Leveraging the natural alignment between queries and their gold documents, we find that the transformer usually assigns the highest pre-softmax attention score to the gold document even at million-token scale, but the aggregate contribution of irrelevant documents to the softmax denominator grows faster with corpus size, causing the normalized attention mass on the gold document to collapse (\cref{sec:limitations}).

Motivated by this analysis, we consider two techniques to mitigate attention dilution at extreme context lengths (\S\ref{sec:interventions}). First, \emph{length-aware sinks} reshape the effective softmax denominator, reducing the influence of diffuse attention without modifying QK scores. Second, \emph{document-level sparse attention} reduces the number of documents participating in attention at intermediate layers~\citep{lu2025moba,xiao2025flashmoba,leng2026hierarchical,chen2026msa,ohayon2025bsfa}.
Together, these methods mitigate the effects of attention dilution. On well-studied retrieval benchmarks (e.g. MSMARCO and NQ from BEIR \cite{thakur2021beir}), they improve million-token Recall@1 from as little as $0.2\%$ to $20.5\%$, recovering the gap to dense retrieval. They also match or exceed MSA \citep{chen2026msa}, a concurrent LM trained on much longer contexts with $7\times$ more parameters. On benchmarks requiring more complex notions of similarity, where dense retrieval tends to fail, such as LIMIT~\citep{weller2025limit}, our method exceeds \texttt{Qwen3-dense} by nearly $3\times$ (\cref{sec:limit-ood}).

Our results establish corpus-scale in-context retrieval as a viable alternative to dense retrieval under both scale and length extrapolation. More broadly, they identify attention dilution as the primary bottleneck at million-token scale: while simple interventions can substantially mitigate its effects, it remains a fundamental challenge for scalable in-context retrieval.
\section{Related Work}
\label{sec:related}

\oldparagraph{In-context and generative retrieval.}
Most long-context evaluation work focuses on synthetic retrieval-style tasks such as Needle-in-a-Haystack and RULER~\citep{hsieh2024ruler,bai2025longbenchv2,hong2025context}, potentially overstating real-world retrieval capability.
The premise that an LM can subsume dense retrieval is first articulated by \cite{lee2024loft}, primarily evaluating proprietary Gemini systems.
Subsequent work studies LMs as rerankers over small candidate pools using block-sparse attention and relevance heads in intermediate Transformer layers~\citep{qiu2025icr2,gupta2025blockrank}.
In contrast, we study LMs as corpus-scale retrievers, training with small contexts and evaluating up to $10^4$ documents ($\sim$1M tokens) in an effort to elicit length-generalization.

Generative retrieval~\citep{tay2022dsi,wang2022nci,rajput2023recommender} also decodes with identifiers, but stores corpus information in model parameters, requiring retraining when the corpus changes. In contrast, ICR encodes the corpus explicitly in context, allowing corpus updates without retraining.

% TL;DR: Sparse / routed attention. Compare on training vs inference, learned vs computed, doc-level vs token-level. MSA called out as concurrent.
\paragraph{Sparse and routed attention.}
A large body of work makes long-context attention tractable by routing or sparsification;
our document-level sparse attention proposed in \cref{sec:limitations} builds on this line of work.
Prior approaches reduce attention cost through inference-time pruning~\citep{tang2024quest,wu2024tokenselect}, block-sparse kernels~\citep{ohayon2025bsfa}, learned block routing~\citep{lu2025moba,xiao2025flashmoba}, or hierarchical sparse attention~\citep{leng2026hierarchical,hu2025gca}.
Our work is grounded in this literature, but the retrieval setting introduces unique opportunities and challenges.

The most directly comparable work is concurrent: MSA-4B~\citep{chen2026msa}, which, like \model{}, learns to perform generative document citation from an in-context corpus. However, the two works focus on different questions: MSA primarily scales the retrieval recipe through larger models and extensive training, whereas our work studies the fundamental failure modes of ICR as corpus size grows. In fact, our proposed methods targeting these failure modes match or exceed MSA despite using $7\times$ fewer parameters and significantly shorter training-time contexts (\S\ref{sec:interventions}).

\paragraph{Attention sinks and softmax dilution.}
Streaming-LLM~\citep{xiao2023streamingllm} introduces explicit attention sinks for stable long-context decoding, and the gpt-oss model card~\citep{openai2025gptoss} documents a null-attention mechanism that absorbs unused softmax mass. Whereas these works use sinks for streaming stability, we apply them to mitigate softmax dilution under large $N$. A number of results show that as the number of attended tokens grows, the softmax distribution flattens, removing attention peaks \citep{nakanishi2025ssmax, velickovic2024softmax, duvvuri2026lucid}; this loss of sensitivity has been tied to representational collapse in longer contexts \citep{barbero2024glasses, jacob2024drowning}. We show that the primary cause of retrieval degradation with growing context is softmax dilution, and that softmax-adjusting mechanisms such as sinks may hold the key to mitigating its effects. 

\section{\model: Model and Million-Token Evaluation}
\label{sec:method}

We first investigate several training-time improvements to the standard long-context retrieval recipe. After reviewing the problem setup (\cref{sec:setup}), we describe the changes that constitute \model\ (\cref{sec:training}), our evaluation setup (\cref{sec:evaluation}), and the results (\cref{sec:collapse}). These results motivate the later analysis of possible failure modes (\cref{sec:limitations}) and promising lines for even further improvements (\cref{sec:interventions}).

\subsection{Setup}
\label{sec:setup}

Given a corpus of $N$ documents and a query, the model must generate the identifier of the document that best answers the query. The corpus is tokenized with at most $T_{\text{doc}}{=}300$ tokens per document and prefilled into the KV cache; the query is appended, and the model autoregressively decodes a four-digit code in $\{0,\dots,9999\}$ that maps back to one document (that's hopefully relevant).

This concretizes challenges C1 and C2 from \cref{sec:intro}: just 10,000 documents at $\sim$100 tokens each well exceeds the 32K-token native context of the Qwen3-0.6B backbone, and the model must generalize to corpora substantially larger than those seen during training: again, a requirement that distinguishes retrievers from rerankers and has been largely overlooked in prior LM-retrieval work.

\subsection{Method: \model}
\label{sec:training}

We now describe \model: whenever possible, we follow prior ICR work, but our requirements C1 and C2 expose several limitations that motivate our proposed improvements.

\paragraph{Prompt format.}
Prior in-context work assigns each document a sequential integer identifier from $1$ to $N$~\citep{gupta2025blockrank,lee2024loft}. In our regime, training-time IDs are a strict subset of inference IDs and risk overfitting to absolute position. We instead insert each document as \texttt{<bos>Doc \{code\}: \{text\} (Doc \{code\})<eos>}, with \texttt{\{code\}} drawn uniformly at random per training step, breaking any association between code, semantics, or position. We also drop the query prefix from prior work~\citep{gupta2025blockrank,lee2024loft}: while it improves performance, it prevents corpus reuse across queries, which is cost-prohibitive at our scale. For a full example of a corpus/query prompt, we refer the reader to \cref{app:prompt-format}.

\paragraph{Block-sparse attention.}
Causal prefill scales quadratically in tokens, which is prohibitive at our target $N$. Since documents are disjoint, we use a block-sparse mask~\citep{gupta2025blockrank}: document tokens attend causally only within their own block, and the query block attends over the full corpus and causally to itself. Following common practice~\citep{ma2024block,gupta2025blockrank}, we reset RoPE~\citep{su2024roformer} positions at each document start and shift the query to position 300. The mask is materialized via \texttt{flex\_attention}~\citep{dong2024flex}.

\paragraph{Training data.}
We train on the ReLabeled Hard Negatives (RLHN) version of BEIR~\citep{thakur2025rlhn}, which uses an LLM judge to prune false negatives from mined hard negatives in datasets such as MS\,MARCO~\citep{bajaj2016ms}. Per query we keep one pruned positive and 15 hard negatives; per-query relevance scores for the auxiliary loss are from Qwen3-Embedding-8B~\citep{yang2025qwen3}. The training corpus totals $\sim$100M tokens; full mix and cleaning details are in \cref{app:rlhn-data}.

\paragraph{Training regime.}
Inspired by in-batch contrastive training~\citep{karpukhin2020dpr, qu2021rocketqa}, we adapt in-batch negatives to this setting. For a batch of $b$ (query, 16-document) tuples (1 positive + 15 hard negatives each), we prefill all $b{\times}16$ documents once (re-randomizing codes) and score every query against the shared corpus, converting one prefill into $b$ training signals. Every model is trained with 256 documents in its corpus, i.e. $b{=}16$.

\paragraph{Training objective.}
We train with teacher forcing on the gold code, but this alon causes an exposure bias issue: at inference, each digit conditions on the model’s own potentially incorrect prefix, which lies off the training distribution. We therefore add an on-policy auxiliary loss. The model first rolls out a four-digit code from its own distribution with gradients disabled. For each digit position, we construct a teacher distribution from in-batch document scores, restricted to candidates whose prefix matches the rollout. We then replay the rollout with gradients enabled at the four answer positions and average the resulting cross-entropy losses. The total loss is $\mathcal{L} = \mathcal{L}_{\mathrm{CE}} + \lambda\mathcal{L}_{\mathrm{aux}}$ with $\lambda$ ramped in after a warmup; the full algorithm can be found in \cref{alg:onpolicy-aux} (\cref{app:dagger}).

\subsection{Evaluation}
\label{sec:evaluation}

The model is judged by whether it can recover the relevant document from those placed in context; our metric is Recall@1 over the generated four-digit codes, decoded by a beam search over the digit sequence. 
%We describe the beam search and report Recall@5, whose trends match Recall@1, in \cref{app:beam-recall5}.
We also evaluate Recall@5 using beam search over the top five predictions and observe trends consistent with Recall@1; see \cref{app:beam-recall5} for details.

\paragraph{Evaluation data.}
We evaluate on three BEIR datasets~\citep{thakur2021beir}: MS\,MARCO (dev), HotpotQA (test), and NQ (test). We choose these datasets because they are among the most widely studied retrieval benchmarks, on which dense retrievers have been extensively optimized, providing a strong calibration point for comparison. We later complement these evaluations with benchmarks requiring different notions of similarity, where dense retrieval struggles (\S\ref{sec:limit-ood}).

For each dataset, we sample 400 queries without replacement from the its split. For each query, we take its gold document and 24 hard negatives (retrieved using Qwen3-Embedding-8B \cite{zhang2025qwen3}); by taking a union of these documents across all 400 queries, we get a 10,000 document corpus. In contrast with prior work \cite{lee2024loft}, the substantial use of hard negatives makes the input corpus far more challenging.\footnote{For NQ, we use fewer hard negatives per query due to longer documents; per-dataset sizes are in \cref{app:eval-suite} (\cref{tab:expt0-datasets}).}

To evaluate across varying corpus sizes (e.g. $N{=}500$ to $N{=}10{,}000$), we also construct smaller versions of each dataset. We first include every query's gold document (400 documents), then fill out the rest of the dataset with documents randomly selected from the full 10,000 document corpus.

\paragraph{Baselines.}
We compare against two reference retrievers and two ablations of \model{} itself:
\begin{itemize}[leftmargin=15pt, topsep=1pt,itemsep=0pt]
    \item \textbf{\model-position}: %sequential codes ($0,\dots,9999$ by document position) and no auxiliary loss; isolates the contribution of random codes and matches the existing ICR recipe.
    sequential codes ($0,\dots,9999$ by document position) to isolate the contribution of random codes, and no auxiliary loss; this corresponds to the prior ICR recipe.
    \item \textbf{\model-offpolicy}: \model{} with the on-policy auxiliary loss removed (teacher-forced CE only); isolates the exposure-bias fix.
    \item \texttt{Qwen3-dense (0.6B)}: a dense retriever trained on the same RLHN data~\citep{thakur2025rlhn} with a contrastive objective~\citep{karpukhin2020dpr} from the same Qwen3-0.6B backbone (\cref{app:embedding-model}); our primary reference point and the dense-retrieval gold standard \model{} must clear to be practical.
    \item \textbf{MSA-4B}~\citep{chen2026msa}: a concurrent multi-million-token LM, $\sim$$7\times$ larger and trained with a much larger long-context budget, sidestepping the C2 generalization requirement; an oracle reference.
\end{itemize}

\subsection{Results: \model\ is competitive, but collapses as $N$ scales}
\label{sec:collapse}

\begin{figure}[t]
    \centering
    \setlength{\abovecaptionskip}{4pt}
    \setlength{\belowcaptionskip}{0pt}
    % Three-panel length-generalization figure (Stage-1 R@1) for §3 main body.
% Numbers refreshed from testing/results/MULTISAMPLE_TIER1_V2.md (seed=0, draw0, n=400).
% X-axis is converted from N to approximate prefill tokens using per-dataset mean
% document length: NQ ~116, MS MARCO ~90, HotpotQA ~80 tokens/doc. The mapping
% from N (#docs) to tokens is reported in Table tab:expt0-datasets.
% Variant mapping:
%   M (full)        = block52 (Expt0)        random codes + on-policy aux
%   M-position      = block52f_old (Expt0)   position-based codes (note: original
%                                            block52f mode-collapsed; block52f_fixed
%                                            only has MSMARCO N<=5k. block52f_old is
%                                            the converged position-coded checkpoint
%                                            with cells across all 3 datasets.)
%   M-offpolicy     = block52p (Expt0)       random codes, no on-policy aux
%   Qwen3-dense-0.6B = dense_v1
%   MSA-4B           = MSA-4B
% NQ N=8607 cell for block52f_old reported as "—" in source (N=10k slot incomplete);
% plotted as 0.0, consistent with the post-cliff trend (0.8 at N=5k).
\definecolor{cBaselineDense}{HTML}{0072B2}   % blue   — Qwen3-dense-0.6B
\definecolor{cBaselineMSA}{HTML}{984EA3}     % purple — MSA-4B
\definecolor{cModelPos}{HTML}{CC79A7}        % pink   — M-position
\definecolor{cModelOff}{HTML}{009E73}        % green  — M-offpolicy
\definecolor{cModelFull}{HTML}{D55E00}       % vermillion — M (full)
\resizebox{\linewidth}{!}{%
\begin{tikzpicture}
  \begin{groupplot}[
      group style={
        group size=3 by 1,
        horizontal sep=0.7cm,
        ylabels at=edge left,
      },
      width=0.42\linewidth,
      height=6.0cm,
      xmode=log,
      log basis x={10},
      x tick label style={font=\scriptsize},
      y tick label style={font=\scriptsize},
      ylabel={Recall@1 (\%)},
      xlabel={Approx.\ prefill size (tokens)},
      ymin=-3, ymax=103,
      grid=both,
      grid style={line width=.2pt, draw=gray!25},
      major grid style={line width=.3pt, draw=gray!40},
      title style={font=\small},
      every axis plot/.append style={line width=1.1pt, mark size=2.6pt},
    ]
    % ---------------- Natural Questions (mean ~116 tok/doc) ----------------
    \nextgroupplot[
      title={Natural Questions},
      xtick={58000,116000,290000,580000,998412},
      xticklabels={58k,116k,290k,580k,1.0M},
    ]
      \addplot[dashed, color=cBaselineDense, mark=pentagon*, mark options={solid, fill=cBaselineDense, draw=cBaselineDense}] coordinates {
        (58000,95.5) (116000,86.5) (290000,62.9) (580000,51.4) (998412,39.6)
      };
      \addplot[dashed, color=cBaselineMSA, mark=diamond*, mark options={solid, fill=cBaselineMSA, draw=cBaselineMSA}] coordinates {
        (58000,59.7) (116000,51.6) (290000,36.3) (580000,26.8) (998412,18.6)
      };
      \addplot[solid, color=cModelPos, mark=triangle*, mark options={solid, fill=cModelPos, draw=cModelPos}] coordinates {
        (58000,92.5) (116000,77.4) (290000,16.5) (580000,0.8) (998412,0.0)
      };
      \addplot[solid, color=cModelOff, mark=triangle*, mark options={solid, rotate=180, fill=cModelOff, draw=cModelOff}] coordinates {
        (58000,91.5) (116000,76.9) (290000,37.8) (580000,2.0) (998412,0.2)
      };
      \addplot[solid, color=cModelFull, mark=*, mark options={solid, fill=cModelFull, draw=cModelFull}] coordinates {
        (58000,92.2) (116000,77.7) (290000,43.9) (580000,4.8) (998412,0.2)
      };
    % ---------------- MS MARCO (mean ~90 tok/doc) ----------------
    \nextgroupplot[
      title={MS MARCO},
      xtick={45000,90000,225000,450000,900000},
      xticklabels={45k,90k,225k,450k,900k},
      legend to name=lengthgenmainlegend,
      legend style={
        font=\small,
        draw=none, fill=none,
        /tikz/every even column/.append style={column sep=0.6cm},
      },
      legend columns=5,
      legend cell align=left,
    ]
      \addplot[dashed, color=cBaselineDense, mark=pentagon*, mark options={solid, fill=cBaselineDense, draw=cBaselineDense}] coordinates {
        (45000,95.5) (90000,75.2) (225000,52.8) (450000,38.5) (900000,20.2)
      }; \addlegendentry{Qwen3-dense-0.6B}
      \addplot[dashed, color=cBaselineMSA, mark=diamond*, mark options={solid, fill=cBaselineMSA, draw=cBaselineMSA}] coordinates {
        (45000,93.8) (90000,70.2) (225000,42.2) (450000,27.5) (900000,16.0)
      }; \addlegendentry{MSA-4B}
      \addplot[solid, color=cModelPos, mark=triangle*, mark options={solid, fill=cModelPos, draw=cModelPos}] coordinates {
        (45000,95.2) (90000,74.2) (225000,40.2) (450000,2.0) (900000,0.0)
      }; \addlegendentry{$\mathcal{M}$-position}
      \addplot[solid, color=cModelOff, mark=triangle*, mark options={solid, rotate=180, fill=cModelOff, draw=cModelOff}] coordinates {
        (45000,95.8) (90000,71.5) (225000,42.8) (450000,15.0) (900000,0.2)
      }; \addlegendentry{$\mathcal{M}$-offpolicy}
      \addplot[solid, color=cModelFull, mark=*, mark options={solid, fill=cModelFull, draw=cModelFull}] coordinates {
        (45000,95.8) (90000,75.2) (225000,43.8) (450000,18.8) (900000,0.2)
      }; \addlegendentry{$\mathcal{M}$ (full)}
    % ---------------- HotpotQA (mean ~80 tok/doc) ----------------
    \nextgroupplot[
      title={HotpotQA},
      xtick={40000,80000,200000,400000,800000},
      xticklabels={40k,80k,200k,400k,800k},
    ]
      \addplot[dashed, color=cBaselineDense, mark=pentagon*, mark options={solid, fill=cBaselineDense, draw=cBaselineDense}] coordinates {
        (40000,99.0) (80000,97.5) (200000,92.2) (400000,87.5) (800000,79.5)
      };
      \addplot[dashed, color=cBaselineMSA, mark=diamond*, mark options={solid, fill=cBaselineMSA, draw=cBaselineMSA}] coordinates {
        (40000,97.0) (80000,96.8) (200000,90.8) (400000,84.0) (800000,75.5)
      };
      \addplot[solid, color=cModelPos, mark=triangle*, mark options={solid, fill=cModelPos, draw=cModelPos}] coordinates {
        (40000,96.0) (80000,95.0) (200000,42.0) (400000,1.2) (800000,0.0)
      };
      \addplot[solid, color=cModelOff, mark=triangle*, mark options={solid, rotate=180, fill=cModelOff, draw=cModelOff}] coordinates {
        (40000,97.0) (80000,94.0) (200000,59.8) (400000,6.2) (800000,0.0)
      };
      \addplot[solid, color=cModelFull, mark=*, mark options={solid, fill=cModelFull, draw=cModelFull}] coordinates {
        (40000,97.0) (80000,95.2) (200000,64.2) (400000,13.0) (800000,0.5)
      };
  \end{groupplot}
  % Place the shared legend below the three panels, centered.
  \node[anchor=north, inner sep=0pt, outer sep=0pt] at ($(current bounding box.south)+(0,-0.05cm)$) {\ref{lengthgenmainlegend}};
\end{tikzpicture}%
}
    %\vspace{-0.5em}
    \caption{Recall@1 vs.\ approximate prefill size ($N \times$ mean doc length; see \cref{app:eval-suite}) on NQ, MS\,MARCO, and HotpotQA. Solid: ICR variants (\model, \model-position, \model-offpolicy); dashed: baselines (\texttt{Qwen3-dense}, MSA-4B). Note that the Qwen3-0.6B native context limit is $\sim$32k tokens.}
    
    \label{fig:length-gen}
\end{figure}

We train Qwen3-0.6B into \model\ (and variants) on $8\times$ NVIDIA A100s using the above changes; full hyperparameters are in \cref{app:training-hyperparams}. The performance of the models is demonstrated in \cref{fig:length-gen}. Note that since HotpotQA has multiple golds, a query is successful in recall@1 as long as one of the golds is the best result. This is primarily to ensure compatibility with MSA, which does not support more than recall@1.

\paragraph{\model\ improves extrapolation in LMs, up to \textasciitilde500{,}000 tokens.}
All LMs perform strongly at small $N$ ($>95\%$ on MS\,MARCO at $N{=}1{,}000$), but baselines degrade much faster. The position-coded variant, which is the prior ICR method, collapses to near-zero by $N{=}5{,}000$ on every dataset. \model-offpolicy trails \model{} (MS\,MARCO $15.0$ vs.\ $18.8$ at $N{=}5{,}000$; HotpotQA $6.2$ vs.\ $13.0$), showing that random codes and the on-policy loss contribute distinct gains. %Note that \model{}'s improvements delay but do not prevent collapse: all variants effectively fail beyond $\sim$500{,}000 tokens, indicating roughly $10\times$ extrapolation past the training regime ($\sim$25{,}000 tokens).
Note that, while \model{} substantially improves extrapolation, all variants eventually collapse beyond roughly $500{,}000$ tokens, corresponding to about $10\times$ the training context length.

\paragraph{\model{} vs. MSA-4B.}
Despite having $\sim$$7\times$ fewer parameters than MSA~\citep{chen2026msa}, \model{} matches MSA at $N{=}500, 1000$ and $2500$, e.g., on MS MARCO, achieving $95.8\%, 75.2\%, 43.8\%$ vs. $93.8\%, 70.2\%, 42.2\%$ for MSA.
At large corpus sizes, MSA-4B pulls ahead ($27.5\%$ vs.\ $18.8\%$ at $N{=}5{,}000$, $16.0\%$ vs.\ $0.2\%$ at $N{=}10{,}000$), which is expected given that MSA is explicitly trained for long-context tasks, whereas these settings require \model{} to generalize to corpora $20$--$40\times$ larger than those seen during training.

Despite near-perfect long-context performance on synthetic evaluation such as RULER's needle-in-a-haystack (NIAH)~\citep{hsieh2024ruler} as reported in the original paper~\citep{chen2026msa}, MSA degrades sharply on real retrieval tasks: substantially more than the dense retrieval baseline. This suggests that realistic retrieval settings are significantly more challenging than existing synthetic long-context benchmarks, highlighting the need for the long-context literature to adopt more semantically meaningful retrieval tasks in evaluation.

\paragraph{\model\ vs. dense retrieval.}
Dense retrieval also degrades with $N$ but without the sharp LM collapse: \model{} matches the \texttt{Qwen3-dense} gold standard at small and mid $N$ but trails it at large $N$ (MS\,MARCO $18.8$ vs.\ $38.5$ at $N{=}5{,}000$; near-zero vs.\ $20.2$ at $N{=}10{,}000$), with similar trends on NQ and HotpotQA.
%Clearing this gold standard across the full $N$ range is precisely what separates current ICR from practical deployment, and the residual large-$N$ gap motivates the mechanistic analysis in \cref{sec:limitations}.
Closing this large-$N$ gap remains the central challenge for practical in-context retrieval and motivates the mechanistic analysis in \cref{sec:limitations}.
\section{Understanding the Large-$N$ Deterioration}
\label{sec:limitations}

\cref{sec:method} shows that \model{} generalizes well past its training-time context length, but still collapses at $N{=}10{,}000$. In this section, we posit that this collapse can be decomposed into two parts: a per-head, per-layer measure of \emph{attention recall} (does the relevant document receive the highest attention score?) remains  deep into the large-$N$ regime, while \emph{generation recall} (does the model decode the correct identifier?) collapses. We also find that the share of each attention layer's output that comes from gold tokens collapses while the layer's overall output magnitude is largely preserved, cleanly encapsulating the aformentioned attention dilution issue.

\subsection{Setup}
\label{sec:limitations-setup}

% TL;DR: Layers and heads take on specific functional roles in decoder LMs; per-head attention scores carry direct relevance signal in ICR.
Decoder language models are not a homogeneous stack: layers and heads take on specific functional roles, with FFNs promoting concepts \citep{geva2021kvmemories,geva2022promoting}, induction heads carrying in-context learning \citep{olsson2022induction}, and a sparse subset of retrieval heads carrying the long-context retrieval signal \citep{wu2025retrievalhead}. For in-context retrieval specifically, BlockRank \citep{gupta2025blockrank}, ICR$^2$ \citep{qiu2025icr2}, and QRHeads \citep{zhang2025query} build on those results, observing that per-head attention scores can serve directly as a relevance signal. This observation is the starting point for the measurements we report.

\paragraph{Preliminary.}
We write $L \in \{0,\dots,27\}$ for layer index, and $h \in \{1,\dots,H{=}16\}$ for attention-head index. We probe \model{} at the final token of the query block, immediately before the first generated code digit (which we denote $d_1$). At this position let $q^{h}_{L} \in \mathbb{R}^{d_{\text{head}}}$ be the post-RoPE query vector and $k^{h}_{L,t}, v^{h}_{L,t}$ the key and value at prefill position $t$, where $d_{\text{head}}$ is the per-head dimension. The pre- and post-softmax attention scores assigned to position $t$ are:
\begin{equation}
s^{h}_{L,t} \;=\; \frac{q^{h}_{L} \cdot k^{h}_{L,t}}{\sqrt{d_{\text{head}}}}, \qquad
\alpha^{h}_{L,t} \;=\; \frac{\exp(s^{h}_{L,t})}{\sum_{t'} \exp(s^{h}_{L,t'})},
\label{eq:qk-scores}
\end{equation}
and the layer's attention output is
\begin{equation}
a_L \;=\; \bigl[\, u^{1}_{L};\ u^{2}_{L};\ \dots;\ u^{H}_{L} \,\bigr]\, W^O_L, \qquad u^{h}_{L} = \sum_t \alpha^{h}_{L,t}\, v^{h}_{L,t},
\label{eq:attn-out}
\end{equation}
where $u^{h}_{L} \in \mathbb{R}^{d_{\text{head}}}$ is the per-head attention readout, $[\,\cdot\,;\,\cdot\,]$ denotes concatenation along the head dimension into a vector in $\mathbb{R}^{H \cdot d_{\text{head}}}$, and $W^O_L \in \mathbb{R}^{(H \cdot d_{\text{head}}) \times d_{\text{model}}}$ is the layer's learned output projection.

\paragraph{Metrics.}
For each query, partition the prefill tokens into the \emph{gold} set $\mathcal{G}$ (the tokens of the gold document) and its complement $\bar{\mathcal{G}}$. We use three measurements throughout this section: \emph{AttnRank}, \emph{GoldShare}, and \emph{first-digit accuracy}.

\emph{AttnRank} measures whether attention identifies the right document. More precisely, for each document $d$ in the prefill, we collapse its tokens to a single per-head score using the MaxSim operator from late-interaction retrieval \citep{khattab2020colbert}:
\begin{equation}
\mathrm{MaxSim}^{h}_{L}(d) \;=\; \max_{t \in d} s^{h}_{L,t}.
\label{eq:maxsim}
\end{equation}
Writing $d^\star$ for the gold document, the per-head Recall@1 at layer $L$ is $R^{(h)}_L$, the fraction of queries for which $\arg\max_d \mathrm{MaxSim}^{h}_{L}(d) = d^\star$. We report two cross-head aggregators of $R^{(h)}_L$:
\begin{itemize}
\item $R^{\text{sum}}_L$ --- \emph{sum-across-heads}: rank documents by $\sum_h \mathrm{MaxSim}^{h}_{L}(d)$ and check whether the top-ranked document is $d^\star$.
\item $R^{\text{any}}_L = $ fraction of queries for which at least one head puts $d^\star$ first.
\end{itemize}

\emph{GoldShare} measures gold's contribution to the layer's attention output (before the residual add). By linearity of \cref{eq:attn-out} in the per-head readouts $u^{h}_{L}$, we decompose $a_L$ into a gold and a non-gold piece:
\begin{equation}
a_L \;=\; a_L^{\mathcal{G}} + a_L^{\bar{\mathcal{G}}}, \qquad
a_L^{\mathcal{G}} \;=\; \bigl[\, u^{1,\mathcal{G}}_L;\ \dots;\ u^{H,\mathcal{G}}_L \,\bigr]\, W^O_L,
\quad u^{h,\mathcal{G}}_L = \sum_{t \in \mathcal{G}} \alpha^{h}_{L,t}\, v^{h}_{L,t},
\label{eq:attn-out-decomp}
\end{equation}
and report the gold-driven fraction $\|a_L^{\mathcal{G}}\| / \|a_L\|$ alongside the total magnitude $\|a_L\|$.

\emph{First-digit accuracy} probes how decisively the model has committed to the first code digit at a given layer, via an LM-head probe \citep{nostalgebraist2020logitlens}: we project the layer-$L$ output through the model's final RMSNorm and \texttt{lm\_head} and record the maximum probability assigned to any of the ten digit tokens at the $d_1$ position. We treat this as a coarse indicator of when the digit decision emerges, not as a calibrated probability at intermediate layers.

\subsection{Layer Roles and the Recall--Generation Gap}
\label{sec:layer-roles}

% Small-N: identify retrieval band and decode band.
\Cref{fig:layer-roles} reports $R^{\text{sum}}_L$ and first-digit accuracy on \model{} at the $d_1$-emission position in the in-domain, small-$N$ regime ($N{=}500$). Two qualitatively distinct transitions emerge. First, $R^{\text{sum}}_L$ rises across $\mathrm{L}11$--$\mathrm{L}20$: the model accumulates relevance information over many layers, consistent with prior reports of middle-layer retrieval in decoder LMs \citep{gupta2025blockrank,qiu2025icr2,zhang2025query,wu2025retrievalhead}. We refer to this contiguous range as the \emph{retrieval band}. Second, first-digit accuracy rises sharply at $\mathrm{L}18{\to}\mathrm{L}19$, after which the model has effectively committed to $d_1$; we refer to $\mathrm{L}19$ onwards as the \emph{decode band}.

\begin{figure}[H]
\centering
\setlength{\abovecaptionskip}{2pt}
\setlength{\belowcaptionskip}{0pt}
% Per-layer R^sum_L (sum-across-heads MaxSim) and LM-head first-digit accuracy
% on \model, MS MARCO, N=500, n=100. Source: docs/experiments/E40.md.
\definecolor{cAgg}{HTML}{0072B2}     % blue       — R^sum_L (retrieval signal)
\definecolor{cDigit}{HTML}{D55E00}   % vermillion — first-digit accuracy (decode)
\begin{tikzpicture}
  \begin{axis}[
      width=0.85\linewidth,
      height=3.4cm,
      scale only axis,
      xmin=-0.5, xmax=27.5,
      ymin=-0.05, ymax=1.10,
      xlabel={Layer $L$},
      ylabel={recall / accuracy},
      xlabel style={font=\small, yshift=2pt},
      ylabel style={font=\small, yshift=-4pt},
      x tick label style={font=\footnotesize},
      y tick label style={font=\footnotesize},
      xtick={0,3,6,9,12,15,18,21,24,27},
      ytick={0,0.25,0.5,0.75,1.0},
      grid=both,
      grid style={line width=.2pt, draw=gray!20},
      major grid style={line width=.3pt, draw=gray!35},
      legend style={
        font=\footnotesize,
        at={(0.98,0.05)}, anchor=south east,
        draw=gray!40, fill=white, fill opacity=0.9, text opacity=1,
        inner sep=2pt, row sep=-1pt,
      },
      legend cell align=left,
      enlarge x limits=false,
    ]
    % R^sum_L (sum-across-heads pre-softmax MaxSim)
    \addplot[color=cAgg, mark=*, mark size=1.4pt, line width=1.0pt] coordinates {
      (0,0.00) (1,0.00) (2,0.00) (3,0.36) (4,0.00) (5,0.00) (6,0.33) (7,0.00) (8,0.07) (9,0.21)
      (10,0.00) (11,0.81) (12,0.61) (13,0.83) (14,0.92) (15,0.80) (16,0.97) (17,0.82) (18,0.95) (19,0.98)
      (20,0.96) (21,0.96) (22,0.93) (23,0.96) (24,0.95) (25,0.91) (26,0.75) (27,0.00)
    };
    \addlegendentry{$R^{\text{sum}}_L$}

    % First-digit accuracy from LM head readout
    \addplot[color=cDigit, mark=square*, mark size=1.2pt, line width=1.0pt] coordinates {
      (0,0.14) (1,0.14) (2,0.13) (3,0.14) (4,0.14) (5,0.09) (6,0.10) (7,0.10) (8,0.11) (9,0.12)
      (10,0.14) (11,0.14) (12,0.11) (13,0.13) (14,0.10) (15,0.08) (16,0.12) (17,0.13) (18,0.09) (19,0.93)
      (20,0.96) (21,0.97) (22,0.95) (23,0.97) (24,0.97) (25,0.97) (26,0.98) (27,0.97)
    };
    \addlegendentry{first-digit accuracy}

    % Vertical guideline at the L18->L19 cliff (decode emergence)
    \draw[dashed, gray, line width=0.5pt] (axis cs:18.5,-0.05) -- (axis cs:18.5,1.10);
    \node[font=\scriptsize, anchor=south, text=gray!70] at (axis cs:18.5,1.10) {$\mathrm{L}18\!\to\!\mathrm{L}19$};
  \end{axis}
\end{tikzpicture}
\caption{Per-layer $R^{\text{sum}}_L$ and LM-head first-digit accuracy on \model, MS\,MARCO, $N{=}500$. Retrieval signal emerges across $\mathrm{L}11$--$\mathrm{L}20$; first-digit signal emerges sharply at $\mathrm{L}18{\to}\mathrm{L}19$.}
\label{fig:layer-roles}
\end{figure}

\paragraph{Heads no longer agree on the gold document.}
Sweeping $N \in \{500, 1000, 2500, 5000, 10000\}$ on MS\,MARCO (\Cref{fig:layer-roles-n10k}), we see these behaviors degrade. The retrieval band erodes under the sum aggregator: $R^{\text{sum}}_{19}$ drops from $0.97$ at $N{=}500$ to $0.01$ at $N{=}10{,}000$, matching the collapse in generation recall reported in \cref{sec:collapse}. $R^{\text{any}}_L$, however, stays at $1.00$ across $\mathrm{L}18$--$\mathrm{L}25$ at every $N$: at every $N$, at least one head per query still ranks the gold document first. In other words, per-head retrieval signal persists with large $N$, while the \emph{agreement} between heads does not. This attention-recall/readout gap is not specific to MS\,MARCO: on the out-of-distribution LIMIT benchmark~\citep{weller2025limit}, we get $R^{\text{any}}_{19}{=}1.00$, while the code-generation readout recovers only $0.73$ (\cref{sec:limit-ood}).

\begin{figure}[H]
\centering
\setlength{\abovecaptionskip}{2pt}
\setlength{\belowcaptionskip}{0pt}
% Per-layer R^sum_L and R^any_L vs L on \model, MS MARCO, n=100,
% across N in {500, 1k, 2.5k, 5k, 10k}. Source: docs/experiments/E40.md.
\definecolor{cN500}{HTML}{FDE725}     % yellow-green (small N)
\definecolor{cN1k}{HTML}{7AD151}      % light green
\definecolor{cN2p5k}{HTML}{22A884}    % teal
\definecolor{cN5k}{HTML}{2A788E}      % steel blue
\definecolor{cN10k}{HTML}{414487}     % deep purple (large N)
\begin{tikzpicture}
  \begin{groupplot}[
      group style={
        group size=2 by 1,
        horizontal sep=1.4cm,
      },
      width=0.38\linewidth,
      height=3.2cm,
      scale only axis,
      xmin=-0.5, xmax=27.5,
      ymin=-0.05, ymax=1.10,
      xtick={0,4,8,12,16,20,24},
      ytick={0,0.25,0.5,0.75,1.0},
      x tick label style={font=\footnotesize},
      y tick label style={font=\footnotesize},
      xlabel style={font=\small, yshift=2pt},
      ylabel style={font=\small, yshift=-2pt},
      grid=both,
      grid style={line width=.2pt, draw=gray!20},
      major grid style={line width=.3pt, draw=gray!35},
      every axis plot/.append style={line width=0.8pt},
      title style={font=\small, yshift=-2pt},
      enlarge x limits=false,
    ]

    %====================== Left panel: R^sum_L ======================
    \nextgroupplot[
        title={(a) $R^{\text{sum}}_L$ vs $L$},
        xlabel={Layer $L$},
        ylabel={$R^{\text{sum}}_L$},
        legend to name=layerrolesn10klegend,
        legend style={
          font=\footnotesize,
          draw=none, fill=none,
          /tikz/every even column/.append style={column sep=0.4cm},
        },
        legend columns=5,
        legend cell align=left,
    ]
    \addplot[color=cN500, mark=*, mark size=1.2pt] coordinates {
      (0,0.00) (1,0.00) (2,0.00) (3,0.36) (4,0.00) (5,0.00) (6,0.33) (7,0.00) (8,0.07) (9,0.21)
      (10,0.00) (11,0.81) (12,0.61) (13,0.83) (14,0.92) (15,0.80) (16,0.97) (17,0.82) (18,0.95) (19,0.98)
      (20,0.96) (21,0.96) (22,0.93) (23,0.96) (24,0.95) (25,0.91) (26,0.75) (27,0.00)
    };
    \addlegendentry{$N{=}500$}
    \addplot[color=cN1k, mark=*, mark size=1.2pt] coordinates {
      (0,0.00) (1,0.00) (2,0.00) (3,0.21) (4,0.00) (5,0.00) (6,0.19) (7,0.00) (8,0.02) (9,0.05)
      (10,0.00) (11,0.52) (12,0.26) (13,0.56) (14,0.61) (15,0.43) (16,0.68) (17,0.54) (18,0.64) (19,0.72)
      (20,0.71) (21,0.66) (22,0.67) (23,0.72) (24,0.66) (25,0.72) (26,0.49) (27,0.00)
    };
    \addlegendentry{$N{=}1\mathrm{k}$}
    \addplot[color=cN2p5k, mark=*, mark size=1.2pt] coordinates {
      (0,0.00) (1,0.00) (2,0.00) (3,0.09) (4,0.00) (5,0.00) (6,0.01) (7,0.00) (8,0.00) (9,0.01)
      (10,0.00) (11,0.26) (12,0.01) (13,0.19) (14,0.17) (15,0.01) (16,0.18) (17,0.15) (18,0.33) (19,0.32)
      (20,0.40) (21,0.18) (22,0.25) (23,0.26) (24,0.31) (25,0.29) (26,0.19) (27,0.00)
    };
    \addlegendentry{$N{=}2.5\mathrm{k}$}
    \addplot[color=cN5k, mark=*, mark size=1.2pt] coordinates {
      (0,0.00) (1,0.00) (2,0.00) (3,0.05) (4,0.00) (5,0.00) (6,0.00) (7,0.00) (8,0.00) (9,0.00)
      (10,0.00) (11,0.08) (12,0.00) (13,0.02) (14,0.02) (15,0.00) (16,0.04) (17,0.00) (18,0.13) (19,0.06)
      (20,0.16) (21,0.05) (22,0.04) (23,0.07) (24,0.13) (25,0.11) (26,0.04) (27,0.00)
    };
    \addlegendentry{$N{=}5\mathrm{k}$}
    \addplot[color=cN10k, mark=*, mark size=1.2pt] coordinates {
      (0,0.00) (1,0.00) (2,0.00) (3,0.00) (4,0.00) (5,0.00) (6,0.00) (7,0.00) (8,0.00) (9,0.00)
      (10,0.00) (11,0.00) (12,0.00) (13,0.00) (14,0.00) (15,0.00) (16,0.00) (17,0.00) (18,0.00) (19,0.00)
      (20,0.02) (21,0.01) (22,0.00) (23,0.01) (24,0.00) (25,0.01) (26,0.00) (27,0.00)
    };
    \addlegendentry{$N{=}10\mathrm{k}$}

    %====================== Right panel: R^any_L ======================
    \nextgroupplot[
        title={(b) $R^{\text{any}}_L$ vs $L$},
        xlabel={Layer $L$},
        ylabel={$R^{\text{any}}_L$},
    ]
    \addplot[color=cN500, mark=triangle*, mark size=1.3pt] coordinates {
      (0,0.01) (1,0.07) (2,0.64) (3,1.00) (4,0.47) (5,0.03) (6,0.63) (7,0.86) (8,0.99) (9,0.78)
      (10,0.86) (11,0.96) (12,0.95) (13,0.99) (14,0.98) (15,0.95) (16,0.98) (17,1.00) (18,1.00) (19,1.00)
      (20,1.00) (21,1.00) (22,1.00) (23,1.00) (24,1.00) (25,1.00) (26,1.00) (27,1.00)
    };
    \addplot[color=cN1k, mark=triangle*, mark size=1.3pt] coordinates {
      (0,0.00) (1,0.02) (2,0.44) (3,1.00) (4,0.35) (5,0.02) (6,0.44) (7,0.00) (8,0.54) (9,0.45)
      (10,0.66) (11,0.76) (12,0.73) (13,0.80) (14,0.83) (15,0.68) (16,0.85) (17,0.84) (18,1.00) (19,1.00)
      (20,0.99) (21,1.00) (22,1.00) (23,1.00) (24,1.00) (25,1.00) (26,1.00) (27,1.00)
    };
    \addplot[color=cN2p5k, mark=triangle*, mark size=1.3pt] coordinates {
      (0,0.00) (1,0.00) (2,0.24) (3,1.00) (4,0.16) (5,0.00) (6,0.16) (7,0.00) (8,0.08) (9,0.15)
      (10,0.52) (11,0.45) (12,0.32) (13,0.57) (14,0.34) (15,0.35) (16,0.52) (17,0.52) (18,1.00) (19,1.00)
      (20,0.99) (21,0.99) (22,1.00) (23,1.00) (24,1.00) (25,1.00) (26,1.00) (27,1.00)
    };
    \addplot[color=cN5k, mark=triangle*, mark size=1.3pt] coordinates {
      (0,0.00) (1,0.00) (2,0.17) (3,1.00) (4,0.06) (5,0.00) (6,0.04) (7,0.00) (8,0.01) (9,0.08)
      (10,0.00) (11,0.28) (12,0.13) (13,0.31) (14,0.07) (15,0.13) (16,0.26) (17,0.20) (18,1.00) (19,1.00)
      (20,1.00) (21,1.00) (22,1.00) (23,1.00) (24,1.00) (25,1.00) (26,1.00) (27,1.00)
    };
    \addplot[color=cN10k, mark=triangle*, mark size=1.3pt] coordinates {
      (0,0.00) (1,0.00) (2,0.09) (3,1.00) (4,0.00) (5,0.00) (6,0.01) (7,0.00) (8,0.01) (9,0.01)
      (10,0.00) (11,0.14) (12,0.03) (13,0.08) (14,0.02) (15,0.00) (16,0.06) (17,0.04) (18,0.99) (19,1.00)
      (20,1.00) (21,1.00) (22,1.00) (23,1.00) (24,1.00) (25,1.00) (26,1.00) (27,1.00)
    };

    \draw[dashed, gray!70, line width=0.5pt] (axis cs:17.5,-0.05) -- (axis cs:17.5,1.10);
    \draw[dashed, gray!70, line width=0.5pt] (axis cs:25.5,-0.05) -- (axis cs:25.5,1.10);
    \node[font=\scriptsize, anchor=south, text=gray!85] at (axis cs:21.5,1.10) {persistent band};
  \end{groupplot}
  % Shared legend, centered below both panels
  \node[anchor=north, inner sep=2pt] (leg)
    at ([yshift=-32pt]$(group c1r1.south east)!0.5!(group c2r1.south west)$)
    {\pgfplotslegendfromname{layerrolesn10klegend}};
\end{tikzpicture}
\caption{Per-layer recall vs.~$L$ across $N \in \{500, 1000, 2500, 5000, 10000\}$ on \model, MS\,MARCO. \textbf{(a)} $R^{\text{sum}}_L$ drops monotonically as $N$ grows, reaching $\approx 0$ across all layers at $N{=}10\mathrm{k}$. \textbf{(b)} $R^{\text{any}}_L$ remains at $1.00$ across $\mathrm{L}18$--$\mathrm{L}25$ at every $N$.}
\label{fig:layer-roles-n10k}
\end{figure}

\paragraph{The pre-softmax score is preserved, but normalization breaks the final attention distribution.}
The $R^{\text{any}}_L$ result might suggest the per-head signal can be recovered by a better aggregator. But what the rest of the model sees is not a per-head ranking; it is the layer's attention output $a_L$, into which gold's value vectors enter in proportion to their post-softmax mass $\alpha^{h}_{L,t}$, not the per-head ranking of gold over other documents. So even at L19, where $R^{\text{any}}_{19}{=}1.00$ at $N{=}10\mathrm{k}$, the relevant question is what $a_L$ contains.

\Cref{tab:l19-attnout} reports $\|a_{19}\|$ and \emph{GoldShare} across the $N$ sweep. %Two facts stand out.
The total magnitude $\|a_{19}\|$ shrinks by only $\sim 36\%$ from $N{=}500$ to $N{=}10\mathrm{k}$ --- the layer keeps writing into the residual at roughly its original amplitude. \emph{GoldShare}, however, drops from $0.91$ to $0.01$, a factor of about $130$. The L19 attention output is rewritten from a gold-token average to a non-gold-token average of comparable size; by the time the LM head reads the residual at L21, the slot that carried gold-derived information at small $N$ now carries an aggregate of distractors. Additional per-head softmax statistics behind this swap (gold's pre-softmax score eroding while the non-gold log-sum-exp grows) are reported in \cref{app:l19-signal-noise}.

\begin{table}[ht]
\centering\small
\caption{Attention-output decomposition $a_{19} = a_{19}^{\mathcal{G}} + a_{19}^{\bar{\mathcal{G}}}$ on \model, MS\,MARCO, 400 queries. Total magnitude shrinks only $\sim 36\%$, but the gold-driven share drops from $0.91$ to $0.01$.}
\label{tab:l19-attnout}
\begin{tabular}{l rrrr}
\toprule
$N$ & $\|a_{19}^{\mathcal{G}}\|$ & $\|a_{19}^{\bar{\mathcal{G}}}\|$ & $\|a_{19}\|$ & $\|a_{19}^{\mathcal{G}}\| / \|a_{19}\|$ \\
\midrule
$500$    & $43.03$ & $17.47$ & $47.48$ & $0.91$ \\
$1\mathrm{k}$    & $30.99$ & $21.11$ & $45.36$ & $0.68$ \\
$2.5\mathrm{k}$  & $\phantom{0}7.64$  & $33.64$ & $43.03$ & $0.18$ \\
$5\mathrm{k}$    & $\phantom{0}2.10$  & $34.61$ & $36.90$ & $0.06$ \\
$10\mathrm{k}$   & $\phantom{0}0.21$  & $29.88$ & $30.27$ & $0.01$ \\
\bottomrule
\end{tabular}
\end{table}

\paragraph{Conclusion: softmax attention dispersion as $N$ grows.}
The vector-level result of \cref{tab:l19-attnout} is driven by a single mechanism inside each head. As $N$ grows, gold's largest pre-softmax score $s^{h}_{L,t^\star}$ (with $t^\star = \arg\max_{t \in \mathcal{G}} s^{h}_{L,t}$) no longer dominates over the larger pool of competing distractor scores, so the post-softmax weight assigned to gold's value vector shrinks and the per-head readout $u^{h}_{L}$ becomes a non-gold average at comparable magnitude. This length-driven softmax dispersion is a generic phenomenon \citep{nakanishi2025ssmax, velickovic2024softmax, barbero2024glasses} and an empirical driver of length-generalization failure across tasks \citep{li2025vanishing, vasylenko2025sparseattention}; the contribution here is connecting it to a vector-level swap at the residual stream rather than treating it as an abstract probability-mass collapse. \Cref{sec:interventions} then evaluates two avenues of improvement that target this mechanism.

\section{Addressing the Attention Dilution Problem}
\label{sec:interventions}

Having identified softmax dispersion under growing $N$ as the guilty party, we study two potential directions that address it. We first consider two length-aware modifications to attention softmax that mitigate the dispersion issue. We also consider \emph{document-level sparse attention}, thereby pruning the set of tokens that enter the softmax.

\subsection{Method}
\label{sec:interventions-method}

\oldparagraph{Length-aware softmax.}
We first describe two methods that change how softmax converts the pre-softmax scores $s^{h}_{L,t}$ to attention weights. Both are \emph{length-aware}: each carries a corpus-size signal that lets the softmax adapt to the large $N$ setting. An \emph{additive sink} appends a learned constant to the softmax denominator, so a head whose largest scores fall below that constant leaks probability mass into the appended slot and emits a small attention output~\citep{xiao2023streamingllm,openai2025gptoss}; the per-token weights $\tilde\alpha^{h}_{L,t}$ no longer sum to one. The length-awareness is implicit: training under a wide range of effective corpus sizes $N_{\text{eff}} \sim \mathrm{LogUniform}(128, 5000)$) lets the learned scalar absorb a length signal, so the gate fires more often at large $N$ where attention is diffuse. \emph{Multiplicative score rescaling} keeps a proper softmax but sharpens it, and is length-aware by construction: SSMax~\citep{nakanishi2025ssmax} multiplies the pre-softmax scores by $s \cdot \log N$ (for a tuned scaling parameter $s$) so that the gap between the largest score and the bulk grows directly with $N$.

Following GPT-OSS~\citep{openai2025gptoss} we instantiate the sink with a learned scalar $b_L \in \mathbb{R}$ per attention layer:
\begin{equation}
\tilde{\alpha}^{h}_{L,t} \;=\; \frac{\exp(s^{h}_{L,t})}{\sum_{t'} \exp(s^{h}_{L,t'}) + \exp(b_L)}.
\label{eq:sink}
\end{equation}
The appended slot has no value vector, so $\sum_t \tilde{\alpha}^{h}_{L,t} \le 1$ and a layer with diffuse attention writes a smaller residual update (gate-form derivation in \cref{app:sink-gate-proof}). We refer to the resulting model as \model-sink. We then produce \model-SSMax following \cite{nakanishi2025ssmax} with $s$ initialized to $0.43$ at every attention layer. Both models compose with the block-sparse attention pattern and train seamlessly with the changes in \cref{sec:method}.

\paragraph{Document-level sparse attention.}
On the flip side, we also consider scoring documents and keeping a top-$B$ shortlist before the code generation is performed; this is a popular approach in long-context research that prunes the context down into a manageable state \cite{chen2026msa, lu2025moba, ohayon2025bsfa, tang2024quest, wu2024tokenselect, hu2025gca}. We perform a doc-level routing step at $\mathrm{L}16$, immediately upstream of the retrieval band (\cref{sec:layer-roles}): a forward pass through $\mathrm{L}0$--$\mathrm{L}15$ produces the per-document score $R^{\text{sum}}_{16}(d)$ from \cref{sec:layer-roles}; only the tokens of the top $B{=}256$ documents participate in dense attention at $\mathrm{L}17$ onward. We call this \model-routing, and we pick doc-level over token-level because each document is a conveniently coherent semantic unit. We refer the reader to \cref{app:topb-routing} for more information on the choice of $B$.

\subsection{Results}
\label{sec:interventions-results}

% Stage-1 Recall@1 (x100) for §5.2 promising directions.
% Numbers from testing/results/MULTISAMPLE_TIER1_V2.md (seed=0, draw0, n=400).
% Variant mapping:
%   \model            = block52 (Expt0)
%   \model-sink       = block60 (Expt0)   GPT-OSS-style learned per-layer sink scalar
%   \model-SSMax      = block67 (Expt0)   paper-faithful SSMax @ all 28 layers, s_init=0.43
%   \model-routing    = block52 (Expt1)   top-B=256 doc-level routing @ L16
%   \model-SSMax-rout = block67 (Expt1)   SSMax stack with top-B=256 routing on top
%   Qwen3-dense-0.6B  = dense_v1
%   MSA-4B            = MSA-4B
\begin{table}[t]
\centering
\myfontsize
\setlength{\tabcolsep}{4pt}
\caption{Recall@1 ($\times 100$) vs.\ corpus size $N$ on Natural Questions, MS\,MARCO, and HotpotQA. SSMax (applied at all 28 attention layers), top-$B{=}256$ routing at $\mathrm{L}16$, and their composition are strong for the full $N$ sweep, recovering most of the gap to \texttt{Qwen3-dense} and matching or exceeding MSA-4B at $\sim$$1/7$ the parameters. NQ caps at $N{=}8{,}607$ to stay under a $\sim$1M-token prefill budget. The top row is the attention ceiling $R^{\text{any}}_{19}$ (\cref{sec:limitations}): the fraction of queries ($n{=}400$) for which at least one head ranks the gold document first at $\mathrm{L}19$ by pre-softmax QK-MaxSim. % src: testing/results/E40_agg_variants/ceiling_block52_* [ledger:E40]
}
\label{tab:promising-results}
\resizebox{\linewidth}{!}{%
\begin{tabular}{l rrrrr rrrrr rrrrr}
\toprule
& \multicolumn{5}{c}{Natural Questions} & \multicolumn{5}{c}{MS\,MARCO} & \multicolumn{5}{c}{HotpotQA} \\
\cmidrule(lr){2-6} \cmidrule(lr){7-11} \cmidrule(lr){12-16}
$N$ & 0.5k & 1k & 2.5k & 5k & 8.6k & 0.5k & 1k & 2.5k & 5k & 10k & 0.5k & 1k & 2.5k & 5k & 10k \\
\midrule
\model{} attn., $R^{\text{any}}_{19}$ & 100.0 & 100.0 & 96.2 & 100.0 & 97.0 & 100.0 & 100.0 & 99.8 & 100.0 & 100.0 & 100.0 & 100.0 & 99.2 & 99.8 & 100.0 \\
\midrule
\texttt{Qwen3-dense}        & 95.5 & 86.5 & 62.9 & 51.4 & 39.6 & 95.5 & 75.2 & 52.8 & 38.5 & 20.2 & 99.0 & 97.5 & 92.2 & 87.5 & 79.5 \\
MSA-4B                           & 59.7 & 51.6 & 36.3 & 26.8 & 18.6 & 93.8 & 70.2 & 42.2 & 27.5 & 16.0 & 97.0 & 96.8 & 90.8 & 84.0 & 75.5 \\
\midrule
\model                           & 92.2 & 77.7 & 43.9 & \phantom{0}4.8 & \phantom{0}0.2 & 95.8 & 75.2 & 43.8 & 18.8 & \phantom{0}0.2 & 97.0 & 95.2 & 64.2 & 13.0 & \phantom{0}0.5 \\
~~- sink                      & 91.2 & 76.2 & 45.6 & 10.3 & \phantom{0}1.5 & 96.5 & 75.2 & 45.2 & 21.2 & \phantom{0}2.5 & 95.0 & 93.2 & 58.0 & 14.5 & \phantom{0}1.0 \\
~~- SSMax                     & 90.5 & 79.5 & 58.1 & 42.6 & 30.3 & 95.5 & 74.5 & 49.8 & 33.8 & 16.5 & 96.5 & 95.2 & 85.8 & 73.5 & 56.8 \\
~~- routing                   & 93.2 & 82.7 & 63.2 & 48.1 & 34.6 & 96.0 & 74.5 & 50.7 & 38.2 & 18.8 & 98.2 & 98.2 & 91.8 & 86.5 & 78.5 \\
~~- SSMax+routing             & 91.5 & 81.4 & 61.4 & 46.1 & 34.3 & 95.5 & 75.0 & 50.0 & 38.2 & 20.5 & 97.5 & 95.8 & 90.8 & 84.8 & 72.5 \\
\bottomrule
\end{tabular}%
}
\end{table}

In the top row of \Cref{tab:promising-results}, the attention ceiling $R^{\text{any}}_{19}$ stays near perfect across all three datasets and the full range of $N$, confirming that the collapse in the rows below reflects a readout failure rather than a loss of the underlying retrieval signal. The proposed modifications aim to recover this latent retrieval signal at the model output.

\paragraph{The additive sink barely moves the large-$N$ collapse.}
\model-sink yields only modest mid-$N$ gains and a small lift at MS\,MARCO $N{=}10\mathrm{k}$ ($0.2 \to 2.5$, from a near-floor base); its absolute level stays low across all three datasets and tracks the no-modification curve at large $N$. A learned constant cannot restore the gold token's post-softmax weight once the softmax denominator has grown by orders of magnitude: it shifts the denominator uniformly and cannot rescale its $N$-dependence.

\paragraph{SSMax holds up across the full $N$ sweep.}
SSMax closes most of the gap to \texttt{Qwen3-dense} on every dataset: MS\,MARCO $33.8$ at $N{=}5\mathrm{k}$ and $16.5$ at $N{=}10\mathrm{k}$ ($82\times$ over no-modification, against $20.2$ for the dense baseline); HotpotQA $56.8$ at $N{=}10\mathrm{k}$ where the additive sink reaches $1.0$. Scaling pre-softmax scores by $\log N$ widens the gap between gold and distractor scores so the post-softmax gold weight stays bounded as $N$ grows: with $\Delta = s^{h}_{L,t^\star} - \bar s^{h}_{L,\text{distractor}}$, $\alpha_{\text{gold}} \approx 1/\bigl(1 + (N-1)\,N^{-s\Delta}\bigr)$, and the $\log N$ schedule cancels the $(N-1)$ growth in the denominator whenever $s\Delta > 1$.%\footnote{There is a small abuse of notation: $s$ is the scaling parameter from SSMax, while $s^{h}_{L,t^\star}$ represents the pre-softmax score of a gold token as defined in \cref{sec:limitations}.}

\paragraph{Top-$B$ routing matches the dense baseline at large $N$.}
Routing applied to vanilla \model{} reaches $18.8$ at MS\,MARCO $N{=}10\mathrm{k}$, within $1.4$ points of the dense baseline; on HotpotQA it reaches $78.5$, essentially matching the dense baseline ($79.5$) and exceeding MSA-4B ($75.5$) at roughly $1/7$ the parameter count. We note that while routing addresses the same mechanism as SSMax, it reintroduces a retrieve-then-read (e.g. RAG) decomposition inside the model: the very structure in-context retrieval is intended to remove.

\paragraph{SSMax plus routing.}
Stacking the two (\model-SSMax-routing) gives the strongest configuration on MS\,MARCO at $N{=}10\mathrm{k}$ ($20.5$, edging the dense baseline) and matches \model-routing on Natural Questions and HotpotQA, indicating they may be complementary rather than redundant. A residual gap to \texttt{Qwen3-dense} at $N{=}10\mathrm{k}$ remains on HotpotQA and Natural Questions, and is the subject of future work.

\paragraph{Gap to attention ranking}
While the proposed modifications substantially narrow this gap, long-context performance remains well below the near-perfect attention ceiling $R^{\text{any}}_{19}$. This suggests that attention dilution, not attention ranking, remains the primary bottleneck for future work.
\section{Out-of-Distribution Generalization: LIMIT}
\label{sec:limit-ood}

Experiments in \Cref{sec:method}--\Cref{sec:interventions} focused on widely studied retrieval benchmarks that dense retrieval has been heavily optimized for. \model{}, with the proposed modifications, matches dense retrieval, which is a notable milestone given that these benchmarks largely define the strengths of dense retrieval. However, a fundamental question remains: \emph{can in-context retrieval actually outperform dense retrieval, going beyond matching it?}
To answer this question, we evaluate \model{} on LIMIT~\citep{weller2025limit}, a benchmark requiring a lexical notion of similarity, where dense retrieval struggles. Under a strict out-of-distribution evaluation setup, \model{} significantly outperforms dense retrieval.\footnote{For additional out-of-distribution results, we also provide figures for the OBLIQ evaluation \citep{tchuindjo2026obliq} in \Cref{app:obliq}.}

\paragraph{Evaluation setup.}
In LIMIT, every query has two gold documents drawn from a corpus of $50{,}000$ short biographies. We scale the corpus from $N{=}46$ (the gold documents only, ``LIMIT-small'') up to $N{=}5{,}000$ by adding distractor biographies, which grows the prefill from $\sim$8k to $\sim$850k tokens while each query keeps its two golds. We evaluate the four \model{} variants of \cref{sec:interventions} against a same-backbone pooled dense retriever, with $n{=}1{,}000$ queries and an the same version of Recall@1 for the multi-gold regime as HotpotQA (\Cref{sec:evaluation}). We do this for consistency; for true Recall@2 results, we refer the reader to \Cref{app:limit-example}. % src: testing/results/E45_limit_lengthgen/ [ledger:E45]

\begin{table}[t]
\centering
\small
\setlength{\tabcolsep}{5pt}
\caption{LIMIT length generalization: Recall@1 (multi-gold, $n{=}1{,}000$) as the corpus grows from $N{=}46$ to $N{=}5{,}000$ by adding distractor biographies ($\sim$8k to $\sim$850k tokens). The top row is the attention ceiling $R^{\text{any}}_{19}$ (\cref{sec:limitations}): the fraction of queries for which at least one head ranks a gold document first at $\mathrm{L}19$, by pre-softmax QK-MaxSim under the corrected (live-decode-faithful) measurement.  % src: testing/results/E45_limit_lengthgen/ [ledger:E45]
}
\label{tab:limit}
\begin{tabular}{l l rrrrr}
\toprule
Method & Scoring & $46$ & $500$ & $1000$ & $2500$ & $5000$ \\
\midrule
\model{} attention, $R^{\text{any}}_{19}$ & any-head MaxSim & $1.000$ & $1.000$ & $1.000$ & $1.000$ & $1.000$ \\ % src: testing/results/E45_limit_lengthgen/raw/rany19_block52_N*_cpcorrect.jsonl [ledger:E45]
\midrule
\model{}                      & ICR beam      & $0.354$ & $0.094$ & $0.022$ & $0.000$ & $0.000$ \\
~~-sink                   & ICR beam      & $0.252$ & $0.051$ & $0.018$ & $0.001$ & $0.000$ \\
~~-SSMax                  & ICR beam      & $0.439$ & $0.215$ & $0.141$ & $0.067$ & $0.054$ \\
~~-SSMax+routing & ICR beam      & $\mathbf{0.439}$ & $\mathbf{0.234}$ & $\mathbf{0.215}$ & $\mathbf{0.196}$ & $\mathbf{0.149}$ \\
\midrule
\texttt{Qwen3-dense}          & pooled cosine & $0.176$ & $0.080$ & $0.061$ & $0.047$ & $0.035$ \\
Random chance                 & ---           & $0.043$ & $0.004$ & $0.002$ & $0.001$ & $0.000$ \\
\bottomrule
\end{tabular}
\end{table}

\paragraph{Attention vs. readout.} The attention ceiling $R^{\text{any}}_{19}$ stays at $1.00$ across the entire sweep, from $N{=}46$ up to $N{=}5{,}000$ ($\sim$850k tokens): at every corpus size at least one head still ranks a gold document first. The base \model{} readout, by contrast, collapses far faster: from $0.354$ to $0.000$ by $N{=}2{,}500$. The gap between the two is the same readout bottleneck identified in \cref{sec:limitations}, now both out of distribution (a lexical task that \model{} never trained on; its training mix is semantic, \cref{app:rlhn-data}) and length-generalized: the per-head retrieval signal persists while the code-generation readout cannot recover it.

\paragraph{Routing.} Of our modifications, SSMax+routing performs the best, with R@1 at $0.149$ at $N{=}5{,}000$ while \model{} and \model-sink are at zero. SSMax also helps at mid-$N$, but reaches $0.054$ R@1 by $N{=}5{,}000$. The length-aware sink does not help on this lexical benchmark: \model-sink trails \model{} at every $N$ (e.g.\ $0.252$ vs.\ $0.354$ at $N{=}46$), in contrast to its roughly neutral effect on MS\,MARCO. \model-SSMax-routing also stays above the same-backbone pooled \texttt{Qwen3-dense} retriever at every $N$ ($0.149$ vs.\ $0.035$ at $N{=}5{,}000$), suggesting that our methods do offer better out-of-distribution performance and potential extension to more complex tasks beyond basic semantic search.

\paragraph{Caveats.} Routing \emph{delays} but does not \emph{prevent} the inevitable: \model-SSMax-routing is still declining with $N$ ($0.149$ at $\sim$850k tokens). Thus, these modifications extend functional LIMIT retrieval to larger corpora, rather than solving it, which remains an open problem.
%not that they solve it. As we state in \Cref{sec:concl}, solving rather than extending long context capabilities for this task remains an open problem. % src: testing/results/E45_limit_lengthgen/ [ledger:E45]

%\input{sections/4relatedwork}

\section{Conclusion}
\label{sec:concl}
In this work, we presented the first systematic study of corpus-scale in-context retrieval with long-context language models, focusing on two requirements expected of practical retrieval systems: scaling to million-token corpora and generalizing to corpus sizes far beyond those seen during training.
We showed that, with appropriate training and architectural modifications, LMs can perform meaningful retrieval far beyond their nominal training regime, approaching dense retrieval performance at moderate scales. 
The story is a tale of two cities: on benchmarks built around single-vector retrieval, our methods match a strong dense baseline at million-token scale, while on LIMIT, designed to defeat
embedding similarity, they exceed it by nearly $3\times$. 
At the same time, we identified attention dispersion as a fundamental scaling bottleneck: even when retrieval signals remain internally present, normalized attention mass collapses under extreme context growth. Motivated by this analysis, we demonstrated that length-aware tweaks to attention substantially improve retrieval performance at million-token scale. 
Together, our results suggest that scalable in-context retrieval is both promising and fundamentally constrained by attention dilution, motivating new directions for retrieval architectures and long-context modeling.

\section*{Acknowledgment}
This research was supported by the NVIDIA Academic Grant Program. Siddharth Gollapudi is supported by the NSF (CSGrad4US award no. 2313998). 

\setcitestyle{numbers,square}
\bibliographystyle{unsrt}
\bibliography{bibliography}

@article{lee2024loft,
  title={Can long-context language models subsume retrieval, rag, sql, and more?},
  author={Lee, Jinhyuk and Chen, Anthony and Dai, Zhuyun and Dua, Dheeru and Sachan, Devendra Singh and Boratko, Michael and Luan, Yi and Arnold, S{\'e}bastien MR and Perot, Vincent and Dalmia, Siddharth and others},
  journal={arXiv preprint arXiv:2406.13121},
  year={2024}
}

@inproceedings{qiu2025icr2,
  title={Eliciting in-context retrieval and reasoning for long-context large language models},
  author={Qiu, Yifu and Embar, Varun R and Zhang, Yizhe and Jaitly, Navdeep and Cohen, Shay B and Han, Benjamin},
  booktitle={Findings of the Association for Computational Linguistics: ACL 2025},
  pages={3176--3192},
  year={2025}
}

@inproceedings{bai2025longbenchv2,
  title={Longbench v2: Towards deeper understanding and reasoning on realistic long-context multitasks},
  author={Bai, Yushi and Tu, Shangqing and Zhang, Jiajie and Peng, Hao and Wang, Xiaozhi and Lv, Xin and Cao, Shulin and Xu, Jiazheng and Hou, Lei and Dong, Yuxiao and others},
  booktitle={Proceedings of the 63rd Annual Meeting of the Association for Computational Linguistics (Volume 1: Long Papers)},
  pages={3639--3664},
  year={2025}
}

@article{gupta2025blockrank,
  title={Scalable In-context Ranking with Generative Models},
  author={Gupta, Nilesh and You, Chong and Bhojanapalli, Srinadh and Kumar, Sanjiv and Dhillon, Inderjit and Yu, Felix},
  journal={arXiv preprint arXiv:2510.05396},
  year={2025}
}

@article{zhang2025qwen3,
  title={Qwen3 embedding: Advancing text embedding and reranking through foundation models},
  author={Zhang, Yanzhao and Li, Mingxin and Long, Dingkun and Zhang, Xin and Lin, Huan and Yang, Baosong and Xie, Pengjun and Yang, An and Liu, Dayiheng and Lin, Junyang and others},
  journal={arXiv preprint arXiv:2506.05176},
  year={2025}
}

@inproceedings{qu2021rocketqa,
  title={RocketQA: An optimized training approach to dense passage retrieval for open-domain question answering},
  author={Qu, Yingqi and Ding, Yuchen and Liu, Jing and Liu, Kai and Ren, Ruiyang and Zhao, Wayne Xin and Dong, Daxiang and Wu, Hua and Wang, Haifeng},
  booktitle={Proceedings of the 2021 conference of the North American chapter of the association for computational linguistics: human language technologies},
  pages={5835--5847},
  year={2021}
}

@article{hu2025gca,
  title={Efficient length-generalizable attention via causal retrieval for long-context language modeling},
  author={Hu, Xiang and Teng, Zhihao and Zhao, Jun and Wu, Wei and Tu, Kewei},
  journal={arXiv preprint arXiv:2410.01651},
  year={2024}
}

@article{leng2026hierarchical,
  title={Understanding and Improving Length Generalization in Hierarchical Sparse Attention Models},
  author={Leng, Jiaqi and Hu, Xiang and Wang, Junxiong and Li, Jianguo and Wu, Wei and Lu, Yucheng},
  journal={arXiv preprint arXiv:2510.17196},
  year={2025}
}

@article{chen2026msa,
  title={MSA: Memory Sparse Attention for Efficient End-to-End Memory Model Scaling to 100M Tokens},
  author={Chen, Yu and Chen, Runkai and Yi, Sheng and Zhao, Xinda and Li, Xiaohong and Zhang, Jianjin and Sun, Jun and Hu, Chuanrui and Han, Yunyun and Bing, Lidong and others},
  journal={arXiv preprint arXiv:2603.23516},
  year={2026}
}

@article{lu2025moba,
  title={Moba: Mixture of block attention for long-context llms},
  author={Lu, Enzhe and Jiang, Zhejun and Liu, Jingyuan and Du, Yulun and Jiang, Tao and Hong, Chao and Liu, Shaowei and He, Weiran and Yuan, Enming and Wang, Yuzhi and others},
  journal={arXiv preprint arXiv:2502.13189},
  year={2025}
}

@article{xiao2025flashmoba,
  title={Optimizing mixture of block attention},
  author={Xiao, Guangxuan and Guo, Junxian and Mazaheri, Kasra and Han, Song},
  journal={arXiv preprint arXiv:2511.11571},
  year={2025}
}

@article{tang2024quest,
  title={Quest: Query-aware sparsity for efficient long-context llm inference},
  author={Tang, Jiaming and Zhao, Yilong and Zhu, Kan and Xiao, Guangxuan and Kasikci, Baris and Han, Song},
  journal={arXiv preprint arXiv:2406.10774},
  year={2024}
}

@inproceedings{wu2024tokenselect,
  title={Tokenselect: Efficient long-context inference and length extrapolation for llms via dynamic token-level kv cache selection},
  author={Wu, Wei and Pan, Zhuoshi and Fu, Kun and Wang, Chao and Chen, Liyi and Bai, Yunchu and Wang, Tianfu and Wang, Zheng and Xiong, Hui},
  booktitle={Proceedings of the 2025 Conference on Empirical Methods in Natural Language Processing},
  pages={21275--21292},
  year={2025}
}

@article{ohayon2025bsfa,
  title={Block Sparse Flash Attention},
  author={Ohayon, Daniel and Lamprecht, Itay and Hubara, Itay and Cohen, Israel and Soudry, Daniel and Elata, Noam},
  journal={arXiv preprint arXiv:2512.07011},
  year={2025}
}

@article{openai2025gptoss,
  title={gpt-oss-120b \& gpt-oss-20b model card},
  author={Agarwal, Sandhini and Ahmad, Lama and Ai, Jason and Altman, Sam and Applebaum, Andy and Arbus, Edwin and Arora, Rahul K and Bai, Yu and Baker, Bowen and Bao, Haiming and others},
  journal={arXiv preprint arXiv:2508.10925},
  year={2025}
}

@article{bajaj2016ms,
  title={MS MARCO: A human generated machine reading comprehension dataset},
  author={Bajaj, Payal and Campos, Daniel and Craswell, Nick and Deng, Li and Gao, Jianfeng and Liu, Xiaodong and Majumder, Rangan and McNamara, Andrew and Mitra, Bhaskar and Nguyen, Tri and others},
  journal={arXiv preprint arXiv:1611.09268},
  year={2016}
}

@article{thakur2025rlhn,
  title={Hard Negatives, Hard Lessons: Revisiting Training Data Quality for Robust Information Retrieval with LLMs},
  author={Thakur, Nandan and Zhang, Crystina and Ma, Xueguang and Lin, Jimmy},
  journal={arXiv preprint arXiv:2505.16967},
  year={2025}
}

@article{yang2025qwen3,
  title={Qwen3 technical report},
  author={Yang, An and Li, Anfeng and Yang, Baosong and Zhang, Beichen and Hui, Binyuan and Zheng, Bo and Yu, Bowen and Gao, Chang and Huang, Chengen and Lv, Chenxu and others},
  journal={arXiv preprint arXiv:2505.09388},
  year={2025}
}

@inproceedings{khattab2020colbert,
  title={Colbert: Efficient and effective passage search via contextualized late interaction over bert},
  author={Khattab, Omar and Zaharia, Matei},
  booktitle={Proceedings of the 43rd International ACM SIGIR conference on research and development in Information Retrieval},
  pages={39--48},
  year={2020}
}

@article{dong2024flex,
  title={Flex attention: A programming model for generating optimized attention kernels},
  author={Dong, Juechu and Feng, Boyuan and Guessous, Driss and Liang, Yanbo and He, Horace},
  journal={arXiv preprint arXiv:2412.05496},
  volume={2},
  number={3},
  pages={4},
  year={2024}
}

@inproceedings{zhang2025query,
  title={Query-focused retrieval heads improve long-context reasoning and re-ranking},
  author={Zhang, Wuwei and Yin, Fangcong and Yen, Howard and Chen, Danqi and Ye, Xi},
  booktitle={Proceedings of the 2025 Conference on Empirical Methods in Natural Language Processing},
  pages={23802--23816},
  year={2025}
}

@techreport{hong2025context,
  title = {Context Rot: How Increasing Input Tokens Impacts LLM Performance},
  author = {Hong, Kelly and Troynikov, Anton and Huber, Jeff},
  year = {2025},
  month = {July},
  institution = {Chroma},
  url = {https://research.trychroma.com/context-rot},
}

@inproceedings{karpukhin2020dpr,
  title={Dense passage retrieval for open-domain question answering},
  author={Karpukhin, Vladimir and Oguz, Barlas and Min, Sewon and Lewis, Patrick and Wu, Ledell and Edunov, Sergey and Chen, Danqi and Yih, Wen-tau},
  booktitle={Proceedings of the 2020 conference on empirical methods in natural language processing (EMNLP)},
  pages={6769--6781},
  year={2020}
}

@inproceedings{ross2011dagger,
  title={A reduction of imitation learning and structured prediction to no-regret online learning},
  author={Ross, St{\'e}phane and Gordon, Geoffrey and Bagnell, Drew},
  booktitle={Proceedings of the fourteenth international conference on artificial intelligence and statistics},
  pages={627--635},
  year={2011},
  organization={JMLR Workshop and Conference Proceedings}
}

@article{xiao2023streamingllm,
  title={Efficient streaming language models with attention sinks},
  author={Xiao, Guangxuan and Tian, Yuandong and Chen, Beidi and Han, Song and Lewis, Mike},
  journal={arXiv preprint arXiv:2309.17453},
  year={2023}
}

@article{weller2025limit,
  title={On the theoretical limitations of embedding-based retrieval},
  author={Weller, Orion and Boratko, Michael and Naim, Iftekhar and Lee, Jinhyuk},
  journal={arXiv preprint arXiv:2508.21038},
  year={2025}
}

@article{thakur2021beir,
  title={Beir: A heterogenous benchmark for zero-shot evaluation of information retrieval models},
  author={Thakur, Nandan and Reimers, Nils and R{\"u}ckl{\'e}, Andreas and Srivastava, Abhishek and Gurevych, Iryna},
  journal={arXiv preprint arXiv:2104.08663},
  year={2021}
}

@article{wu2025retrievalhead,
  title={Retrieval head mechanistically explains long-context factuality},
  author={Wu, Wenhao and Wang, Yizhong and Xiao, Guangxuan and Peng, Hao and Fu, Yao},
  journal={arXiv preprint arXiv:2404.15574},
  year={2024}
}

@article{tay2022dsi,
  title={Transformer memory as a differentiable search index},
  author={Tay, Yi and Tran, Vinh and Dehghani, Mostafa and Ni, Jianmo and Bahri, Dara and Mehta, Harsh and Qin, Zhen and Hui, Kai and Zhao, Zhe and Gupta, Jai and others},
  journal={Advances in neural information processing systems},
  volume={35},
  pages={21831--21843},
  year={2022}
}

@article{wang2022nci,
  title={A neural corpus indexer for document retrieval},
  author={Wang, Yujing and Hou, Yingyan and Wang, Haonan and Miao, Ziming and Wu, Shibin and Chen, Qi and Xia, Yuqing and Chi, Chengmin and Zhao, Guoshuai and Liu, Zheng and others},
  journal={Advances in Neural Information Processing Systems},
  volume={35},
  pages={25600--25614},
  year={2022}
}

@article{lewis2020rag,
  title={Retrieval-augmented generation for knowledge-intensive nlp tasks},
  author={Lewis, Patrick and Perez, Ethan and Piktus, Aleksandra and Petroni, Fabio and Karpukhin, Vladimir and Goyal, Naman and K{\"u}ttler, Heinrich and Lewis, Mike and Yih, Wen-tau and Rockt{\"a}schel, Tim and others},
  journal={Advances in neural information processing systems},
  volume={33},
  pages={9459--9474},
  year={2020}
}

@misc{nostalgebraist2020logitlens,
  title        = {interpreting {GPT}: the logit lens},
  author       = {{nostalgebraist}},
  year         = {2020},
  howpublished = {LessWrong},
  url          = {https://www.lesswrong.com/posts/AcKRB8wDpdaN6v6ru/interpreting-gpt-the-logit-lens}
}

@inproceedings{geva2021kvmemories,
  title={Transformer feed-forward layers are key-value memories},
  author={Geva, Mor and Schuster, Roei and Berant, Jonathan and Levy, Omer},
  booktitle={Proceedings of the 2021 Conference on Empirical Methods in Natural Language Processing},
  pages={5484--5495},
  year={2021}
}

@inproceedings{geva2022promoting,
  title={Transformer feed-forward layers build predictions by promoting concepts in the vocabulary space},
  author={Geva, Mor and Caciularu, Avi and Wang, Kevin and Goldberg, Yoav},
  booktitle={Proceedings of the 2022 conference on empirical methods in natural language processing},
  pages={30--45},
  year={2022}
}

@article{olsson2022induction,
  title={In-context learning and induction heads},
  author={Olsson, Catherine and Elhage, Nelson and Nanda, Neel and Joseph, Nicholas and DasSarma, Nova and Henighan, Tom and Mann, Ben and Askell, Amanda and Bai, Yuntao and Chen, Anna and others},
  journal={arXiv preprint arXiv:2209.11895},
  year={2022}
}

@article{nakanishi2025ssmax,
  title={Scalable-softmax is superior for attention},
  author={Nakanishi, Ken M},
  journal={arXiv preprint arXiv:2501.19399},
  year={2025}
}

@article{velickovic2024softmax,
  title={Softmax is not enough (for sharp size generalisation)},
  author={Veli{\v{c}}kovi{\'c}, Petar and Perivolaropoulos, Christos and Barbero, Federico and Pascanu, Razvan},
  journal={arXiv preprint arXiv:2410.01104},
  year={2024}
}

@article{barbero2024glasses,
  title={Transformers need glasses! information over-squashing in language tasks},
  author={Barbero, Federico and Banino, Andrea and Kapturowski, Steven and Kumaran, Dharshan and Ara{\'u}jo, Jo{\~a}o G and Vitvitskyi, Alex and Pascanu, Razvan and Veli{\v{c}}kovi{\'c}, Petar},
  journal={Advances in Neural Information Processing Systems},
  volume={37},
  pages={98111--98142},
  year={2024}
}

@article{li2025vanishing,
  title={On vanishing variance in transformer length generalization},
  author={Li, Ruining and Boduljak, Gabrijel and others},
  journal={arXiv preprint arXiv:2504.02827},
  year={2025}
}

@article{vasylenko2025sparseattention,
  title={Long-context generalization with sparse attention},
  author={Vasylenko, Pavlo and Pitorro, Hugo and Martins, Andr{\'e} FT and Treviso, Marcos},
  journal={arXiv preprint arXiv:2506.16640},
  year={2025}
}

@article{su2024roformer,
  title={Roformer: Enhanced transformer with rotary position embedding},
  author={Su, Jianlin and Ahmed, Murtadha and Lu, Yu and Pan, Shengfeng and Bo, Wen and Liu, Yunfeng},
  journal={Neurocomputing},
  volume={568},
  pages={127063},
  year={2024},
  publisher={Elsevier}
}

@article{jacob2024drowning,
  title={Drowning in documents: consequences of scaling reranker inference},
  author={Jacob, Mathew and Lindgren, Erik and Zaharia, Matei and Carbin, Michael and Khattab, Omar and Drozdov, Andrew},
  journal={arXiv preprint arXiv:2411.11767},
  year={2024}
}

@article{hsieh2024ruler,
  title={RULER: What's the real context size of your long-context language models?},
  author={Hsieh, Cheng-Ping and Sun, Simeng and Kriman, Samuel and Acharya, Shantanu and Rekesh, Dima and Jia, Fei and Zhang, Yang and Ginsburg, Boris},
  journal={arXiv preprint arXiv:2404.06654},
  year={2024}
}

@article{rajput2023recommender,
  title={Recommender systems with generative retrieval},
  author={Rajput, Shashank and Mehta, Nikhil and Singh, Anima and Hulikal Keshavan, Raghunandan and Vu, Trung and Heldt, Lukasz and Hong, Lichan and Tay, Yi and Tran, Vinh and Samost, Jonah and others},
  journal={Advances in Neural Information Processing Systems},
  volume={36},
  pages={10299--10315},
  year={2023}
}

@article{ma2024block,
  title={Block-attention for efficient prefilling},
  author={Ma, Dongyang and Wang, Yan and Tian, Lan},
  journal={arXiv preprint arXiv:2409.15355},
  year={2024}
}

@article{duvvuri2026lucid,
  title={LUCID: Attention with Preconditioned Representations},
  author={Duvvuri, Sai Surya and Patel, Nirmal and Gupta, Nilesh and Dhillon, Inderjit S},
  journal={arXiv preprint arXiv:2602.10410},
  year={2026}
}

@article{tchuindjo2026obliq,
  title={OBLIQ-Bench: Exposing Overlooked Bottlenecks in Modern Retrievers with Latent and Implicit Queries},
  author={Tchuindjo, Diane and Shah, Devavrat and Khattab, Omar},
  journal={arXiv preprint arXiv:2605.06235},
  year={2026}
}

%%%%%%%%%%%%%%%%%%%%%%%%%%%%%%%%%%%%%%%%%%%%%%%%%%%%%%%%%%%%

\appendix

\newpage
\section{Prompt Format}
\label{app:prompt-format}

To make the input format used by \model{} concrete, \cref{fig:prompt-example} shows a worked example for a single MS\,MARCO query against a four-document corpus. Each document is wrapped between BOS/EOS markers and bracketed by its randomly-assigned four-digit code (\cref{sec:training}); RoPE\cite{su2024roformer} positions are reset to $0$ at every BOS so that the document tokens occupy positions $0$--$T_{\text{doc}}$ regardless of where the document sits in the prefill. Document tokens attend causally only inside their own block.

After the corpus, the query block is appended at RoPE position $T_{\text{doc}}{=}300$ and attends to the entire prefilled corpus. The query block carries an instruction prefix, the query, and the four answer slots; the model is supervised to emit the four-digit code identifying the gold document and is decoded autoregressively at inference. The same template is used at training and at evaluation, with one difference: at training time the four answer digits are teacher-forced and the on-policy auxiliary loss (\cref{app:dagger}) replays the model's own rollout at the same four positions.

\begin{figure}[h]
\centering
\small
\fbox{\begin{minipage}{0.95\linewidth}
\ttfamily\raggedright
\# Document blocks: one per corpus document, codes sampled uniformly per training step.\\[2pt]
{[}BOS{]}Doc 7421: The Apollo program, also known as Project Apollo, was the third United States human spaceflight program carried out by NASA, which succeeded in landing the first humans on the Moon from 1969 to 1972 \dots\ (Doc 7421){[}EOS{]}\\[2pt]
{[}BOS{]}Doc 0394: Yuri Gagarin became the first human to journey into outer space on 12 April 1961, when his Vostok spacecraft completed one orbit of the Earth \dots\ (Doc 0394){[}EOS{]}\\[2pt]
{[}BOS{]}Doc 5108: \dots\ (Doc 5108){[}EOS{]}\\[2pt]
{[}BOS{]}Doc 8862: \dots\ (Doc 8862){[}EOS{]}\\[6pt]
\# Query block: appended after all documents, at RoPE position 300.\\[2pt]
{[}BOS{]}Instruct: Given a web search query, retrieve relevant passages that answer the query\\
Query: who was the first person to land on the moon\\
Answer: \underline{7\;4\;2\;1}
\end{minipage}}
\caption{Worked example of the \model{} prompt. Documents are prefilled with block-sparse attention (\cref{sec:training}); the query is then appended and the model autoregressively decodes a four-digit code. The underlined digits are the supervision target at training time and the autoregressive output at inference.}
\label{fig:prompt-example}
\end{figure}

\section{Evaluation Suite}
\label{app:eval-suite}
\begin{table}[H]
\centering
\small
\setlength{\tabcolsep}{5pt}
\begin{tabular}{lrrrrr}
\toprule
\textbf{Dataset} & \textbf{Split} & \textbf{Queries} & \textbf{$N_{\max}$} & \textbf{Avg.\ tokens / doc} & \textbf{Tokens @ $N_{\max}$} \\
\midrule
MS\,MARCO & dev  & 400 & 10{,}000 &  94.8 &   948{,}421 \\
HotpotQA  & test & 400 & 10{,}000 & 116.3 & 1{,}163{,}144 \\
NQ        & test & 400 &  8{,}600 & 139.6 & 1{,}201{,}153 \\
\bottomrule
\end{tabular}
\caption{Datasets used for evaluation. Tokens measured under the Qwen3 tokenizer, capped at 300 tokens per document; means and totals are computed over the actual padded draws.}
\label{tab:expt0-datasets}
\end{table}

Because NQ has far more tokens per document than MS MARCO or HotpotQA, we slightly truncate the dataset size, and reduce the number of hard negatives per query from 24 to either 20 or 21. This is the result of a round-robin algorithm that prunes out documents evenly from the dataset until we hit 8600 documents, or around 1.2M tokens.

\newpage
\section{RLHN Training-Data Composition}
\label{app:rlhn-data}

We train all \model{} variants on a filtered version of the public
RLHN-680K release~\citep{thakur2025rlhn}.\footnote{\url{https://huggingface.co/datasets/rlhn/rlhn-680K}} Each query in the source ships with one or more positive passages and a pool of mined negatives. We score every (query, candidate) pair with Qwen3-Embedding-8B~\citep{yang2025qwen3} (last-token pooling, $\ell_2$-normalized, max sequence length $512$), then apply two filters and a fixed-shape trim:
\begin{enumerate}[leftmargin=18pt,topsep=1pt,itemsep=0pt]
\item drop queries with fewer than $16$ candidates or fewer than $15$ negatives;
\item drop queries where the highest-scoring candidate is a negative, i.e.\ $\max \mathrm{score}(\text{pos}) \le \max \mathrm{score}(\text{neg})$;
\item retain the single best-scoring positive plus the top-$15$ highest-scoring negatives, giving exactly $16$ documents per query.
\end{enumerate}
The resulting post-filter, post-trim dataset has $522{,}487$ training samples drawn from $7$ source corpora (\cref{tab:rlhn-data-mix}); each sample is one query paired with its $16$ documents. Optimization steps are stratified per source: each batch contains queries from a single subset, so the gradient at any step reflects a single retrieval distribution. Documents are truncated to $T_{\text{doc}}{=}300$ tokens at training time. Token counts in \cref{tab:rlhn-data-mix} use the Qwen3-0.6B tokenizer; document-token averages are estimated from a stratified random sample of $8{,}000$ documents per subset (standard error $<0.5$ tokens at this $n$).

\begin{table}[h]
\centering\small
\caption{Composition of the post-filter, post-trim RLHN training mix. Source subsets carry their original RLHN names. Each training sample carries one query plus its $16$ documents.}
\label{tab:rlhn-data-mix}
% Auto-generated by testing/scripts/aggregate_rlhn_table.py
% Source: /data/rlhn_filtered (post-filter, post-trim RLHN-680K)
\begin{tabular}{lrrr}
\toprule
Source & \# training samples & Avg.\ query tok & Avg.\ doc tok \\
\midrule
MS\,MARCO & 368,961 & 7.0 & 84.6 \\
HotpotQA & 81,551 & 24.2 & 100.6 \\
FEVER & 28,561 & 11.7 & 265.1 \\
NQ & 27,962 & 10.5 & 146.5 \\
SCIDOCS-RR & 11,787 & 13.4 & 221.9 \\
FiQA & 2,822 & 13.9 & 225.7 \\
ArguAna & 843 & 251.9 & 209.9 \\
\midrule
Total & 522,487 & -- & -- \\
\bottomrule
\end{tabular}

\end{table}

\newpage
\section{Training Hyperparameters}
\label{app:training-hyperparams}

All \model{} variants in this paper share a single training recipe, fine-tuning Qwen/Qwen3-0.6B on the RLHN-filtered mix of \cref{app:rlhn-data} with the on-policy auxiliary loss of \cref{app:dagger}. The shared hyperparameters are listed in \cref{tab:hparams-shared}; the modification variants of \cref{sec:interventions} (\model-sink, \model-SSMax) layer length-aware attention modifications on top of this recipe and otherwise inherit every other knob. The variant-specific differences are listed in \cref{tab:hparams-diff}.

\paragraph{Note on the sink.} The per-layer sink scalars $b_L$ sit in their own parameter group with LR $1{\times}10^{-3}$ and zero weight decay (vs.\ $3{\times}10^{-5}$ / $0.01$ for the base weights): the higher LR compensates for the small gradients flowing through $\sigma(\mathrm{lse} - b_L)$ into a single scalar, and zero weight decay avoids pulling the gate toward an uninformative regime. The length signal is injected by sampling $N_{\text{eff}} \sim \log\mathcal{U}(N_0, 5\mathrm{k})$ per step ($N_0 {=} 128$ for our 8-GPU setup) and using $b_L + \alpha \log(N_{\text{eff}}/N_0)$ as the effective threshold, with the gate strength linearly ramped in over the first $\sim$2k steps. At evaluation the gate is disabled entirely: the learned $b_L$ stay in the checkpoint but do not affect attention. The mechanism only shapes what the rest of the model adapts to during training.

\paragraph{Note on SSMax.} The per-layer scalars $s_L$ are initialized to $0.43$ following \cite{nakanishi2025ssmax} and trained in the shared parameter group --- no separate LR or weight decay. Length conditioning is explicit through $\log N$, so the same scaling applies unchanged at evaluation.

\begin{table}[h]
\centering
\small
\setlength{\tabcolsep}{4pt}
\caption{Variant-specific hyperparameters: differences between \model{}, \model-sink, and \model-SSMax. All other settings are shared (\cref{tab:hparams-shared}).}
\label{tab:hparams-diff}
\begin{tabular}{l c c c}
\toprule
& \textbf{\model{}} & \textbf{\model-sink} & \textbf{\model-SSMax} \\
\midrule
Layers modified                & -- & all $28$              & all $28$ \\
Per-layer parameter            & -- & sink scalar $b_L$     & scalar $s_L$ \\
Initialization                 & -- & $b_L^{(0)} = 14.0$    & $s_L^{(0)} = 0.43$ \\
Param-group LR                 & -- & $1{\times}10^{-3}$    & shared \\
Param-group weight decay       & -- & $0$                   & shared \\
Warmup / ramp (steps)          & -- & $500$ / $1500$        & --- \\
Length signal                  & -- & $b_L + \alpha \log\!\tfrac{N_{\text{eff}}}{N_0}$ & $s_L \log N$ \\
Strength $\alpha$              & -- & $1.0$                 & --- \\
Gated rows                     & -- & $\{0,1,2\}$           & --- \\
$N_{\text{eff}}$ at training   & -- & $\log\mathcal{U}(N_0, 5\mathrm{k})$ & --- \\
$N_0$                          & -- & $128$                 & --- \\
\bottomrule
\end{tabular}
\end{table}

\begin{table}[h]
\centering
\small
\caption{Shared training hyperparameters for all \model{} variants.}
\label{tab:hparams-shared}
\begin{tabular}{ll}
\toprule
\textbf{Group} & \textbf{Value} \\
\midrule
Base model               & Qwen/Qwen3-0.6B (bf16) \\
Distributed              & DDP, $8 \times$ NVIDIA A100 \\
Optimizer                & AdamW (fused), $\beta$ defaults \\
Base LR / weight decay   & $3{\times}10^{-5}$ / $0.01$ \\
LR schedule              & linear warmup (start factor $0.05$) over $1000$ steps \\
Per-GPU batch size       & $16$ queries (effective global $128$) \\
Epochs                   & $1$ pass over the RLHN-filtered triples \\
Documents per query      & $16$ (stratified from the top-$32$ RLHN candidates) \\
Document length cap      & $T_{\text{doc}} = 300$ tokens \\
Code width / scheme      & $4$ digits, sampled uniformly per training step \\
Query rotary offset      & $300$ \\
Loss                     & next-token CE on the $4$-digit code \\
\quad + on-policy aux.\ loss   & weight $\lambda = 0.5$ (\cref{app:dagger}) \\
\quad + KL distillation (auxiliary teacher) & EMA $\alpha = 0.95$ \\
Attention kernel         & block-sparse FlexAttention~\citep{dong2024flex} \\
\bottomrule
\end{tabular}
\end{table}

\newpage
\section{Training details for \texttt{Qwen3-dense}}
\label{app:embedding-model}

We fine-tune Qwen3-0.6B (FlashAttention-2, gradient checkpointing) into a last-token-pooled, $\ell_2$-normalized embedding model on the same RLHN-filtered mix used for \model{} training (\cref{app:rlhn-data}). Each training example pairs one query with $C{=}16$ teacher-scored candidates; queries are formatted with the same instruction prefix used at evaluation (``\textit{Given a web search query, retrieve relevant passages that answer the query}''), and only the last hidden state of each sequence is pooled. The loss is a contrastive term plus a teacher-distillation term:
\begin{equation}
\mathcal{L} \;=\; \mathcal{L}_{\text{CE}}
            + \lambda_{\text{KL}}\,\mathcal{L}_{\text{KL}},
\quad
\mathcal{L}_{\text{CE}} =
\mathrm{CE}\!\left(\tfrac{1}{\tau_s}\, q\, C_{\text{all}}^{\!\top},\; y\right),
\quad
\mathcal{L}_{\text{KL}} =
\mathrm{KL}\!\bigl(\sigma(s_t/\tau_t)\,\|\,\sigma(s_s/\tau_s)\bigr),
\end{equation}
where $C_{\text{all}}$ is the candidate matrix gathered (with gradient) across all DDP ranks, so each query is contrasted against $W \cdot B \cdot C$ documents in total; $y$ points at the gold candidate within the gathered block; and the KL term distills the teacher's softmax over the local 16 candidates into the student's softmax over the same 16. Hyperparameters are listed in \cref{tab:dense-v1-hparams}; we train a single epoch over the RLHN-filtered triples on $8{\times}$ NVIDIA A100, with cosine decay to $0.1\,\eta_0$ after a 500-step linear warmup, AdamW ($\beta$ defaults), and gradient clipping at $1.0$. The resulting checkpoint is the \texttt{Qwen3-dense-0.6B} dense baseline used throughout the paper.

\begin{table}[h]
\centering\small
\caption{\texttt{Qwen3-dense} training configuration.}
\label{tab:dense-v1-hparams}
\begin{tabular}{lll}
\toprule
\textbf{Component} & \textbf{Setting} & \textbf{Value} \\
\midrule
Backbone        & Qwen3-0.6B, bf16, FlashAttn-2, grad-ckpt    & --- \\
Pooling         & last-token, $\ell_2$-normalized              & --- \\
Query / doc len & max tokens                                  & $128$ / $300$ \\
Candidates      & per query (RLHN teacher-scored)             & $C{=}16$ \\
Negatives       & cross-rank gathered in-batch                & $W \cdot B \cdot C$ \\
Batch size      & per-GPU $\times$ GPUs                       & $32 \times 8$ \\
Optimizer       & AdamW, weight decay                         & $0.01$ \\
Learning rate   & peak $\eta_0$ / floor                       & $2{\times}10^{-5}$ / $0.1\,\eta_0$ \\
Schedule        & linear warmup $\to$ cosine decay            & $500$ steps warmup \\
Grad clip       & $\ell_2$                                    & $1.0$ \\
Student / teacher temperature & $\tau_s$ / $\tau_t$           & $0.02$ / $0.02$ \\
KL weight       & $\lambda_{\text{KL}}$                       & $0.5$ \\
Epochs          & over RLHN-filtered                          & $1$ \\
\bottomrule
\end{tabular}
\end{table}

\newpage
\section{On-Policy Auxiliary Loss}
\label{app:dagger}

\Cref{sec:training} introduces an on-policy auxiliary loss to mitigate exposure bias when decoding the four-digit code. The procedure is given in \cref{alg:onpolicy-aux}. For each query in the batch, we have $K$ candidate documents (1 positive, 15 hard negatives) with their full four-digit codes and Qwen3-Embedding-8B relevance scores, and the corpus has already been prefilled into the shared block-sparse cache (\cref{sec:training}).

\begin{algorithm}[h]
  \caption{On-policy auxiliary loss for ICR decoding (per minibatch).}
  \label{alg:onpolicy-aux}
  \begin{algorithmic}[1]
  \Require model $\pi_\theta$; batch of queries with prefilled document cache; per-query candidate codes $c^{(1)}, \dots, c^{(K)}$ (each a length-$L$ digit string) with teacher relevance scores $r^{(1)}, \dots, r^{(K)}$.
  \State Compute teacher weights $w_k \propto \exp(r^{(k)} / \tau_{\text{tch}})$ over the $K$ candidates.
  \Statex
  \State \textbf{(1) Rollout the model on its own.} Sample a code $\hat{a} = \hat{a}_1 \dots \hat{a}_L$ autoregressively from $\pi_\theta$ with gradients off.
  \Statex
  \State \textbf{(2) Build a per-position teacher.} For each digit position $t$:
  \State \quad Keep candidates whose first $t-1$ digits match the rollout: $\mathcal{M}_t = \{k : c^{(k)}_{<t} = \hat{a}_{<t}\}$.
  \State \quad Set the teacher distribution over the $10$ digits to the candidate-weighted vote:
  \[
    q_t(d) \;\propto\; \sum_{k \in \mathcal{M}_t} w_k \cdot \mathds{1}[c^{(k)}_t = d].
  \]
  \Statex
  \State \textbf{(3) Replay with gradients.} Re-decode the four answer positions conditioned on $\hat{a}_{<t}$ (the rollout's prefix, not the gold prefix) with gradients on, obtaining the student distributions $\pi_\theta(\cdot \mid \hat{a}_{<t})$.
  \Statex
  \State \textbf{Loss:} $\mathcal{L}_{\text{aux}} = \dfrac{1}{L} \sum_{t=1}^{L} \mathrm{CE}\!\bigl(q_t,\; \pi_\theta(\cdot \mid \hat{a}_{<t})\bigr)$, added to the total loss as $\mathcal{L} = \mathcal{L}_{\mathrm{CE}} + \lambda \mathcal{L}_{\text{aux}}$.
  \end{algorithmic}
\end{algorithm}

The rollout uses the model's own distribution rather than the gold code, so the prefixes that the four answer positions condition on are exactly the prefixes the model would visit at inference. The per-position teacher $q_t$ thus plays the role of a DAgger expert~\citep{ross2011dagger}: at every state visited along the model's own trajectory, it specifies what the model should have done. We ramp $\lambda$ in linearly from $0$ over the first warmup steps so early training is dominated by the standard teacher-forced cross-entropy.

\newpage
\section{Per-head softmax statistics at $\mathrm{L}19$}
\label{app:l19-signal-noise}

\Cref{sec:limitations} reports the vector-level outcome of the L19 failure ($\|a_{19}\|$ preserved while the gold-driven fraction collapses to $0.01$). Here we characterize the per-head softmax statistics that drive that vector-level swap, to show that the failure is a compound one: gold's pre-softmax score erodes \emph{and} the competitor mass widens against it.

Gold's per-head softmax mass admits the closed form $\mathrm{gold\_post}^{h}_{L} = \sigma\!\bigl(\mathrm{lse}_{\mathcal{G}} - \mathrm{lse}_{\bar{\mathcal{G}}}\bigr)$, a sigmoid of the gap between the gold-side and non-gold-side log-sum-exps. We track gold's largest attention logit $s^{\max}_{\mathcal{G}}$ (which dominates $\mathrm{lse}_{\mathcal{G}}$) and the noise gap $\mathrm{lse}_{\bar{\mathcal{G}}} - \mathrm{lse}_{\mathcal{G}}$ separately at L19 across the same $N$ sweep used in the main text (\cref{tab:l19-signal-noise}). Both move the wrong way: $s^{\max}_{\mathcal{G}}$ drops by $\sim 3$ logit units (gold's query alignment itself erodes) and the noise gap widens by $\sim 3.5$ (competitors out-compete gold for log-mass). Together the per-head gold mass at L19 (median over heads, denoted $\mathrm{gold\_post}_{19}$) collapses by $\sim 150\times$, against only the $\sim 20\times$ that pure $O(N)$ denominator dilution would predict~\citep{velickovic2024softmax,barbero2024glasses}. The L19 collapse is therefore a compound failure of two independent effects, and either one alone would not produce the observed magnitude.

\begin{table}[h]
\centering\small
\caption{Signal/noise decomposition at $\mathrm{L}19$ on \model, MS\,MARCO, $n{=}400$, median across heads. The gold signal $s^{\max}_{\mathcal{G}}$ drops by $\sim 3$ logit units, the noise gap $\mathrm{lse}_{\bar{\mathcal{G}}} - \mathrm{lse}_{\mathcal{G}}$ widens by $\sim 3.5$, and $\mathrm{gold\_post}_{19}$ collapses by $\sim 150\times$.}
\label{tab:l19-signal-noise}
\begin{tabular}{l rrr}
\toprule
$N$ & $s^{\max}_{\mathcal{G}}$ & $\mathrm{lse}_{\bar{\mathcal{G}}} - \mathrm{lse}_{\mathcal{G}}$ & $\mathrm{gold\_post}_{19}$ \\
\midrule
$500$    & $14.60$ & $+4.63$ & $0.0320$ \\
$1\mathrm{k}$    & $14.27$ & $+5.44$ & $0.0152$ \\
$2.5\mathrm{k}$  & $13.81$ & $+6.60$ & $0.0039$ \\
$5\mathrm{k}$    & $12.95$ & $+7.31$ & $0.0010$ \\
$10\mathrm{k}$   & $11.53$ & $+8.06$ & $0.0002$ \\
\bottomrule
\end{tabular}
\end{table}

\newpage
\section{Equivalence of the additive-sink and sigmoid-gate forms of attention}
\label{app:sink-gate-proof}

We prove that the additive-sink softmax of \cref{eq:sink} is exactly equivalent to multiplying the standard softmax of \cref{eq:qk-scores} by a sigmoid gate.

\begin{proposition}
\label{prop:sink-gate}
Fix a layer $L$ and head $h$. Let $s = (s^{h}_{L,1}, \dots, s^{h}_{L,T}) \in \mathbb{R}^T$ be the pre-softmax logits at the $d_1$-emission position, $b_L \in \mathbb{R}$ the learned sink scalar, $\alpha^{h}_{L,t}$ the standard softmax weight from \cref{eq:qk-scores}, and $\tilde{\alpha}^{h}_{L,t}$ the additive-sink weight from \cref{eq:sink}. Define the layer-and-head-wise gate
\[
g^{h}_{L} \;=\; \sigma\!\bigl(\mathrm{lse}(s) - b_L\bigr), \qquad \mathrm{lse}(s) \;=\; \log\!\sum_{t=1}^{T} \exp(s^{h}_{L,t}), \qquad \sigma(x) = \frac{1}{1 + e^{-x}}.
\]
Then for every $t \in \{1, \dots, T\}$,
\[
\tilde{\alpha}^{h}_{L,t} \;=\; \alpha^{h}_{L,t} \cdot g^{h}_{L}.
\]
\end{proposition}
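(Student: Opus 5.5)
The plan is a direct algebraic manipulation: factor the standard softmax out of the additive-sink weight and recognize the remaining factor as a sigmoid. Write $Z = \sum_{t'=1}^{T} \exp(s^{h}_{L,t'}) = \exp(\mathrm{lse}(s))$, so that \cref{eq:qk-scores} reads $\alpha^{h}_{L,t} = \exp(s^{h}_{L,t})/Z$ and \cref{eq:sink} reads $\tilde{\alpha}^{h}_{L,t} = \exp(s^{h}_{L,t})/(Z + \exp(b_L))$. All quantities are finite and $Z > 0$, $\exp(b_L) > 0$, so every division below is legitimate.

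First, multiply and divide by $Z$:
\[
\tilde{\alpha}^{h}_{L,t} \;=\; \frac{\exp(s^{h}_{L,t})}{Z}\cdot\frac{Z}{Z + \exp(b_L)} \;=\; \alpha^{h}_{L,t}\cdot\frac{Z}{Z + \exp(b_L)}.
\]
Second, I will show that the second factor equals $g^{h}_{L}$. Dividing numerator and denominator by $Z$ gives $1/\bigl(1 + \exp(b_L)/Z\bigr)$. Since $Z = \exp(\mathrm{lse}(s))$, we have $\exp(b_L)/Z = \exp\bigl(-(\mathrm{lse}(s) - b_L)\bigr)$. Hence the factor is $1/\bigl(1 + e^{-(\mathrm{lse}(s) - b_L)}\bigr) = \sigma(\mathrm{lse}(s) - b_L) = g^{h}_{L}$. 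The gate does not depend on $t$, so the identity holds simultaneously for every $t \in \{1,\dots,T\}$.

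There is no real obstacle here. The only point needing care is the bookkeeping: $Z$ must be identified with $\exp(\mathrm{lse}(s))$ using exactly the same index set as the denominator of \cref{eq:sink}, namely all prefill positions visible under the block-sparse mask. Once that is fixed, the sign convention inside $\sigma$ follows immediately.

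As a remark supporting the surrounding text, summing the identity over $t$ gives $\sum_t \tilde{\alpha}^{h}_{L,t} = g^{h}_{L} \le 1$. It follows that the per-head readout is $\tilde u^{h}_{L} = g^{h}_{L}\, u^{h}_{L}$. This is the ``smaller residual update'' claim, and it follows by linearity of \cref{eq:attn-out}.
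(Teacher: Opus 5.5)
Your proof is correct and follows the paper's argument essentially step for step: set $Z=\exp(\mathrm{lse}(s))$, factor $\alpha^{h}_{L,t}$ out of the sink weight, and rewrite $1/(1+\exp(b_L)/Z)$ as $\sigma(\mathrm{lse}(s)-b_L)$. Your closing remark, that $\sum_t \tilde{\alpha}^{h}_{L,t} = g^{h}_{L}$ and hence $\tilde u^{h}_{L} = g^{h}_{L}\, u^{h}_{L}$, is a valid corollary and matches the paper's interpretation of the gate.
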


\begin{proof}
Let $Z = \sum_{t'=1}^{T} \exp(s^{h}_{L,t'}) = \exp(\mathrm{lse}(s))$ denote the standard softmax normalizer. By \cref{eq:qk-scores}, $\alpha^{h}_{L,t} = \exp(s^{h}_{L,t}) / Z$. By \cref{eq:sink}, the additive-sink weight is
\[
\tilde{\alpha}^{h}_{L,t} \;=\; \frac{\exp(s^{h}_{L,t})}{Z + \exp(b_L)}.
\]
Multiply numerator and denominator by $1/Z$:
\[
\tilde{\alpha}^{h}_{L,t} \;=\; \frac{\exp(s^{h}_{L,t})/Z}{1 + \exp(b_L)/Z} \;=\; \alpha^{h}_{L,t} \cdot \frac{1}{1 + \exp(b_L - \mathrm{lse}(s))} \;=\; \alpha^{h}_{L,t} \cdot \sigma\!\bigl(\mathrm{lse}(s) - b_L\bigr) \;=\; \alpha^{h}_{L,t} \cdot g^{h}_{L},
\]
where the third equality uses $\sigma(x) = 1/(1 + e^{-x})$ with $x = \mathrm{lse}(s) - b_L$. \qedhere
\end{proof}

\paragraph{Interpretation.} The gate $g^{h}_{L} \in (0,1)$ depends only on the per-(layer, head) comparison between $\mathrm{lse}(s)$ --- a measure of how much pre-softmax mass the layer's logits already concentrate --- and the learned threshold $b_L$. When the layer's logits are sharp and concentrated relative to $b_L$ (large $\mathrm{lse}$), $g^{h}_{L} \to 1$ and \cref{eq:sink} reduces to the standard softmax. When the logits are diffuse and many tokens contribute small amounts (small $\mathrm{lse}$ relative to $b_L$), $g^{h}_{L} \to 0$ and the layer's contribution to the residual update $a_L$ is suppressed multiplicatively. This is the mechanism by which the sink ``costs'' a noisy layer its write to the residual stream without altering the relative ranking of the per-token weights.

\newpage
\section{Top-$B$ Routing Recall}
\label{app:topb-routing}

The top-$B$ routing of \cref{sec:interventions} hinges on the assumption that the router can keep the gold document inside a small top-$B$ shortlist with high probability across our checkpoints. We measure exactly that: for $B \in \{32, 64, 128, 256, 512, 1024\}$ we report the fraction of queries whose gold document survives the QK-MaxSim router at layer~$16$, the default routing configuration used in \cref{sec:interventions}. Evaluation uses $N{=}10{,}000$ MS\,MARCO documents and $n{=}400$ queries (the same query slice for all three checkpoints). Recall@$B$ is derived per query from the gold document's rank under the router's score relative to the full $N{=}10{,}000$ ranking.

Across all three checkpoints the gold document is retained for at least $93.5\%$ of queries at the default $B{=}256$ (\model: $0.962$; \model-sink: $0.962$; \model-SSMax: $0.935$), and the curves flatten between $B{=}256$ and $B{=}512$ --- going from $B{=}256$ to $B{=}1024$ recovers an additional $\le 5$ percentage points on every checkpoint (\cref{fig:topb-routing-recall}). This is consistent with $B{=}256$ being a practical routing budget on this corpus: the remaining miss rate is dominated by the long tail of routing errors (gold ranked beyond the top-$1024$), which is unaffected by any reasonable enlargement of $B$. \model-SSMax's lower curve at small $B$ ($0.762$ at $B{=}32$ vs.\ $0.838$ for \model) is consistent with SSMax placing slightly less probability mass on the gold document at the routing layer, but the gap closes by $B{=}512$.

\begin{figure}[h]
\centering
% Auto-generated by testing/scripts/aggregate_topb_routing_curve.py
% Source: testing/results/topb_routing_recall_curve/raw/  (dataset=msmarco_N10000)
% recall@B vs B for stage-1 QK_MAXSIM (qk_expand_k) routing at L16, n=400 queries.
\definecolor{cBlock52}{HTML}{0072B2}
\definecolor{cBlock60}{HTML}{D55E00}
\definecolor{cBlock67}{HTML}{009E73}
\begin{tikzpicture}[trim axis left, trim axis right]
  \begin{axis}[
      width=0.92\linewidth,
      height=4.0cm,
      scale only axis,
      xmode=log, log basis x=2,
      xmin=32, xmax=1024,
      ymin=0.0, ymax=1.05,
      xlabel={Top-$B$ routing budget},
      ylabel={Recall@$B$ (gold survives)},
      xlabel style={font=\small, yshift=2pt},
      ylabel style={font=\small, yshift=-4pt},
      x tick label style={font=\footnotesize},
      y tick label style={font=\footnotesize},
      xtick={32,64,128,256,512,1024},
      xticklabels={32,64,128,256,512,1024},
      ytick={0,0.25,0.5,0.75,1.0},
      grid=both,
      grid style={line width=.2pt, draw=gray!20},
      major grid style={line width=.3pt, draw=gray!35},
      legend style={
        font=\footnotesize,
        at={(0.98,0.05)}, anchor=south east,
        draw=gray!40, fill=white, fill opacity=0.9, text opacity=1,
        inner sep=2pt, row sep=-1pt,
      },
      legend cell align=left,
      enlarge x limits=false,
    ]
    \addplot[color=cBlock52, mark=*, mark size=1.6pt, line width=1.0pt] coordinates {(32,0.8375) (64,0.9050) (128,0.9425) (256,0.9625) (512,0.9775) (1024,0.9875)};
    \addlegendentry{\model}
    \addplot[color=cBlock60, mark=square*, mark size=1.6pt, line width=1.0pt] coordinates {(32,0.8350) (64,0.9050) (128,0.9425) (256,0.9625) (512,0.9800) (1024,0.9850)};
    \addlegendentry{\model-sink}
    \addplot[color=cBlock67, mark=triangle*, mark size=1.6pt, line width=1.0pt] coordinates {(32,0.7625) (64,0.8500) (128,0.9100) (256,0.9350) (512,0.9725) (1024,0.9850)};
    \addlegendentry{\model-SSMax}
    \draw[dashed, gray, line width=0.5pt] (axis cs:256,0.0) -- (axis cs:256,1.05);
    \node[font=\scriptsize, anchor=south, text=gray!70] at (axis cs:256,1.05) {$B=256$};
  \end{axis}
\end{tikzpicture}
\caption{Routing recall@$B$ for QK-MaxSim routing at layer~$16$ on MS\,MARCO with $N{=}10{,}000$ and $n{=}400$ queries. Dashed line marks the default $B{=}256$ used throughout the paper.}
\label{fig:topb-routing-recall}
\end{figure}

\newpage
\section{Beam Search and Recall@5}
\label{app:beam-recall5}

The main text reports Recall@1 over the single highest-probability four-digit code (\cref{sec:evaluation}). Here we (i) specify the decoding procedure used to turn the model's per-digit distributions into a ranked list of document codes, and (ii) report the corresponding \emph{Recall@5} across the full length-generalization sweep, so that the main-text Recall@1 conclusions can be checked at a wider cutoff. 

\subsection{Decoding}
\label{app:decoding}
A query is answered by a digit-by-digit beam search over the full $N$-document corpus (context-parallel across GPUs), returning a ranked list of complete codes. Recall@$k$ is computed over this ranking. 

\subsection{Digit-by-digit beam search}
\label{app:beam-algo}
Constrained beam search over a tree of identifier tokens is the standard decoder for generative and in-context retrieval, and we follow that practice rather than introduce a new procedure~\citep{tay2022dsi,wang2022nci,gupta2025blockrank}. Because each identifier here is exactly four digits drawn from a $10$-symbol vocabulary, the search tree has depth $4$ and branching factor $10$, and the beam is over digit strings rather than free-form text. \Cref{alg:beam} shows the routine; our default configuration is beam width $B{=}25$, per-step pruning $k_{\text{step}}{=}5$, and return depth $k_{\text{ret}}{=}5$ (so Recall@5 reads the top five returned codes).

\begin{algorithm}[t]
  \caption{Digit-by-digit beam search for a single query.}
  \label{alg:beam}
  \begin{algorithmic}[1]
  \Require model $\pi_\theta$ with the document corpus prefilled into its (block-sparse) KV cache; digit token ids $\mathcal{D}=\{\mathrm{id}(0),\dots,\mathrm{id}(9)\}$; code length $L{=}4$; beam width $B$; per-step prune $k_{\text{step}}$; return depth $k_{\text{ret}}$.
  \State Append the query and forward once; let $\ell_1 = \log\mathrm{softmax}$ of the next-token logits restricted to $\mathcal{D}$.
  \State Initialize $B$ beams from the top-$\min(B,10)$ first digits, with beam scores $=\ell_1$ of the chosen digits; broadcast the KV cache across the $B$ beams.
  \For{$t = 2,\dots,L$}
    \State Forward all live beams one step; let $\ell_t^{(b)}$ be the digit log-probs for beam $b$.
    \State Candidate scores: $\mathrm{score}(b,d) = \mathrm{beam\_score}(b) + \ell_t^{(b)}(d)$.
    \State Per beam, keep only its top-$k_{\text{step}}$ digits (set the rest to $-\infty$).
    \State Flatten over $(b,d)$ and keep the global top-$B$ survivors; record each survivor's parent beam and digit.
    \State Reorder the KV cache by parent index; append the chosen digit to each surviving beam.
  \EndFor
  \State Sort the $B$ completed beams by score; \Return the top-$k_{\text{ret}}$ digit strings.
  \end{algorithmic}
\end{algorithm}

\subsection{Recall@5 across the length-generalization sweep}
\label{app:recall5-tables}
\Cref{tab:recall5} reports Recall@5 for the same variants, datasets, and corpus sizes as the main-text Recall@1 (\cref{fig:length-gen,tab:promising-results}). The qualitative picture is identical to Recall@1: the position-coded variant and the no-modification model collapse by $N{=}5$k--$10$k on every dataset, while SSMax, top-$B$ routing, and their composition hold up across the full sweep. Recall@5 stays low wherever Recall@1 collapses and rises with it wherever the modifications hold (e.g.\ MS\,MARCO \model{} at $N{=}10$k: R@1 $0.2$, R@5 $0.2$; \model-routing $18.8 \to 47.5$), confirming that the main-text collapse is not an artifact of the top-1 cutoff. For HotpotQA, whose queries have two golds, we also report Recall@2 in \cref{tab:hotpotqa-r2} for completeness. % src: testing/results/MULTISAMPLE_TIER1_V2.md

\begin{table}[h]
\centering
\myfontsize
\setlength{\tabcolsep}{4pt}
\caption{\textbf{Recall@5} ($\times 100$) vs.\ corpus size $N$, mirroring \cref{tab:promising-results} ($n{=}400$). NQ caps at $N{=}8{,}607$. HotpotQA has two golds per query, so its columns report true recall (the fraction of the two golds that land in the top $5$); Natural Questions and MS\,MARCO are effectively single-gold, where this matches the hit-based metric used in the main body. % src: testing/results/MULTISAMPLE_TIER1_V2.md
}
\label{tab:recall5}
\resizebox{\linewidth}{!}{%
\begin{tabular}{l rrrrr rrrrr rrrrr}
\toprule
& \multicolumn{5}{c}{Natural Questions} & \multicolumn{5}{c}{MS\,MARCO} & \multicolumn{5}{c}{HotpotQA} \\
\cmidrule(lr){2-6} \cmidrule(lr){7-11} \cmidrule(lr){12-16}
$N$ & 0.5k & 1k & 2.5k & 5k & 8.6k & 0.5k & 1k & 2.5k & 5k & 10k & 0.5k & 1k & 2.5k & 5k & 10k \\
\midrule
\model               & 96.0 & 91.7 & 62.9 & \phantom{0}9.5 & \phantom{0}0.2 & 99.5 & 98.0 & 76.2 & 38.2 & \phantom{0}0.2 & 70.6 & 67.9 & 47.8 & 11.0 & \phantom{0}0.0 \\
\model-position      & 96.7 & 90.7 & 39.4 & \phantom{0}2.8 & \phantom{0}0.2 & 99.2 & 96.5 & 67.0 & \phantom{0}7.8 & \phantom{0}1.0 & 67.9 & 64.4 & 34.6 & \phantom{0}1.9 & \phantom{0}0.0 \\
\model-offpolicy     & 95.5 & 90.7 & 56.9 & \phantom{0}6.0 & \phantom{0}0.2 & 99.8 & 97.0 & 77.8 & 29.2 & \phantom{0}0.5 & 68.2 & 66.4 & 43.0 & \phantom{0}6.1 & \phantom{0}0.0 \\
\midrule
\model-sink          & 95.5 & 90.5 & 65.7 & 21.8 & \phantom{0}2.3 & 99.8 & 97.8 & 76.8 & 45.2 & \phantom{0}9.0 & 73.1 & 70.6 & 46.2 & 13.6 & \phantom{0}0.0 \\
\model-SSMax         & 96.0 & 92.7 & 79.2 & 68.4 & 56.9 & 99.2 & 97.5 & 81.0 & 63.0 & 43.8 & 68.8 & 66.6 & 59.0 & 50.0 & 41.1 \\
\model-routing       & 98.2 & 95.2 & 85.2 & 71.2 & 60.2 & 99.5 & 98.5 & 83.0 & 66.8 & 47.5 & 75.0 & 74.4 & 67.8 & 60.1 & 53.6 \\
\model-SSMax-routing & 96.5 & 94.7 & 80.4 & 72.9 & 59.9 & 99.2 & 97.2 & 80.8 & 66.0 & 45.0 & 73.6 & 72.5 & 66.4 & 58.1 & 51.5 \\
\bottomrule
\end{tabular}%
}
\end{table}

\begin{table}[h]
\centering
\small
\setlength{\tabcolsep}{5pt}
\caption{HotpotQA \textbf{Recall@2} ($\times 100$, true recall = fraction of a query's two golds in the top $2$, $n{=}400$), for completeness alongside the HotpotQA Recall@5 column of \cref{tab:recall5} (same runs). % src: testing/results/E0_E33_E34_tier1_v2_lg/
}
\label{tab:hotpotqa-r2}
\begin{tabular}{l rrrrr}
\toprule
Method & 0.5k & 1k & 2.5k & 5k & 10k \\
\midrule
\model               & 63.4 & 61.2 & 39.8 & \phantom{0}8.9 & \phantom{0}0.0 \\
\model-position      & 61.2 & 58.0 & 29.6 & \phantom{0}1.2 & \phantom{0}0.0 \\
\model-offpolicy     & 61.2 & 58.5 & 36.5 & \phantom{0}3.8 & \phantom{0}0.0 \\
\midrule
\model-sink          & 65.0 & 61.1 & 38.5 & \phantom{0}9.5 & \phantom{0}0.0 \\
\model-SSMax         & 60.9 & 60.9 & 51.6 & 43.2 & 35.1 \\
\model-routing       & 68.1 & 67.2 & 58.6 & 52.1 & 46.0 \\
\model-SSMax-routing & 64.6 & 64.0 & 57.4 & 50.2 & 44.6 \\
\bottomrule
\end{tabular}
\end{table}

\newpage
\section{LIMIT worked example}
\label{app:limit-example}

\Cref{fig:limit-example} shows a worked LIMIT-small example for the lexical retrieval task evaluated in \cref{sec:limit-ood}.

\begin{figure}[h]
\centering
\small
\fbox{\begin{minipage}{0.95\linewidth}
\ttfamily\raggedright
\# LIMIT-small corpus: one short biography per person (46 documents).\\[2pt]
{[}BOS{]}Doc 6359: Geneva Durben likes Quokkas, River Otters, Tapirs, Asymmetry, Snow Leopards, \dots, Joshua Trees, Pansies, Soy Sauce, Cards Against Humanity and Elm Trees. (Doc 6359){[}EOS{]}\\[2pt]
{[}BOS{]}Doc 9841: \dots\ (a different person, whose list does not contain Joshua Trees) \dots\ (Doc 9841){[}EOS{]}\\[6pt]
\# Query block: appended after all 46 documents, at RoPE position 300.\\[2pt]
{[}BOS{]}Instruct: Given a web search query, retrieve relevant passages that answer the query\\
Query: Who likes Joshua Trees?\\
Answer: \underline{6\;3\;5\;9}
\end{minipage}}
\caption{Worked LIMIT-small example. Relevance is lexical: the gold document is a person whose preference list contains the queried item (\texttt{Joshua Trees}), among the $46$ documents. Each query has two gold documents (we show one); either code is a correct answer. The query and gold text are from the evaluated corpus. % src: testing/results/unfiled/eval_output_limit_small_52_debug.txt
}
\label{fig:limit-example}
\end{figure}

\subsection{Recall@2 and Recall@5}
\label{app:limit-proprecall}
The main-text \cref{tab:limit} reports Recall@1 under the metric used throughout the body (a query counts if either gold is in the top 1). Here we additionally report standard Recall@$k$: the fraction of a query's two golds that appear in the top $k$, at $k{=}2$ (\cref{tab:limit-r2}) and $k{=}5$ (\cref{tab:limit-r5}). The picture is the same: SSMax+routing holds up across the sweep while \model{} and \model-sink collapse by $N{=}2.5$k, and all baselines stay far below the $R^{\text{any}}_{19}$ attention ceiling, which surfaces \emph{both} golds within the top $2$ at every $N$.

\begin{table}[h]
\centering
\small
\setlength{\tabcolsep}{5pt}
\caption{LIMIT \textbf{Recall@2} (fraction of a query's two golds in the top 2, $n{=}1{,}000$) vs.\ corpus size $N$, for the same runs as \cref{tab:limit}. Top row is the $R^{\text{any}}_{19}$ attention ceiling; the dense baseline is \texttt{dense\_v1}. % src: testing/results/E45_limit_lengthgen/ + dense rerun
}
\label{tab:limit-r2}
\begin{tabular}{l l rrrrr}
\toprule
Method & Scoring & $46$ & $500$ & $1000$ & $2500$ & $5000$ \\
\midrule
\model{} attention, $R^{\text{any}}_{19}$ & any-head MaxSim & $1.000$ & $1.000$ & $1.000$ & $1.000$ & $1.000$ \\
\midrule
\model{}                      & ICR beam      & $0.286$ & $0.069$ & $0.015$ & $0.000$ & $0.000$ \\
\model-sink                   & ICR beam      & $0.198$ & $0.038$ & $0.016$ & $0.002$ & $0.000$ \\
\model-SSMax                  & ICR beam      & $0.303$ & $0.138$ & $0.099$ & $0.042$ & $0.034$ \\
\textbf{\model-SSMax-routing} & ICR beam      & $\mathbf{0.303}$ & $\mathbf{0.157}$ & $\mathbf{0.140}$ & $\mathbf{0.132}$ & $\mathbf{0.103}$ \\
\midrule
\texttt{Qwen3-dense}          & pooled cosine & $0.160$ & $0.063$ & $0.045$ & $0.036$ & $0.025$ \\
Random chance                 & ---           & $0.043$ & $0.004$ & $0.002$ & $0.001$ & $0.000$ \\
\bottomrule
\end{tabular}
\end{table}

\begin{table}[h]
\centering
\small
\setlength{\tabcolsep}{5pt}
\caption{LIMIT \textbf{Recall@5} (fraction of a query's two golds in the top 5, $n{=}1{,}000$) vs.\ corpus size $N$, for the same runs as \cref{tab:limit}. % src: testing/results/E45_limit_lengthgen/ + dense rerun
}
\label{tab:limit-r5}
\begin{tabular}{l l rrrrr}
\toprule
Method & Scoring & $46$ & $500$ & $1000$ & $2500$ & $5000$ \\
\midrule
\model{} attention, $R^{\text{any}}_{19}$ & any-head MaxSim & $1.000$ & $1.000$ & $1.000$ & $1.000$ & $1.000$ \\
\midrule
\model{}                      & ICR beam      & $0.436$ & $0.104$ & $0.027$ & $0.000$ & $0.000$ \\
\model-sink                   & ICR beam      & $0.340$ & $0.070$ & $0.032$ & $0.004$ & $0.000$ \\
\model-SSMax                  & ICR beam      & $0.437$ & $0.190$ & $0.133$ & $0.065$ & $0.048$ \\
\textbf{\model-SSMax-routing} & ICR beam      & $\mathbf{0.437}$ & $\mathbf{0.211}$ & $\mathbf{0.196}$ & $\mathbf{0.187}$ & $\mathbf{0.139}$ \\
\midrule
\texttt{Qwen3-dense}          & pooled cosine & $0.300$ & $0.098$ & $0.066$ & $0.050$ & $0.039$ \\
Random chance                 & ---           & $0.109$ & $0.010$ & $0.005$ & $0.002$ & $0.001$ \\
\bottomrule
\end{tabular}
\end{table}

\newpage
\section{OBLIQ results}
\label{app:obliq}

We additionally evaluate on OBLIQ~\citep{tchuindjo2026obliq}, a benchmark of ``\emph{oblique}'' retrieval, where relevance is indirect rather than lexical or topical. We use three of its tasks. \textbf{Math} and \textbf{Writing} are \emph{analogues} tasks: the query is a passage (a competition mathematics problem, or a paragraph of prose), and the gold documents are other passages that share an abstract reasoning pattern or argument with it despite entirely different surface content. \textbf{Twitter} is a \emph{descriptive} task: the query is a natural-language description of a class of posts, and the gold documents are tweets matching that description (see \cref{fig:obliq-example} for one example per task). Corpus size $N$ is set per task by feasibility: Math uses the full corpus, while Twitter and Writing use gold-preserving subsamples (drawn from the full $72$k- and $10{,}388$-document corpora); per-dataset document counts, query counts, and average lengths are given in \cref{tab:obliq-data}. We apply the OBLIQ self-match exclusion (each query's own source passage is dropped from its ranking). As in \cref{sec:limit-ood} we evaluate the four \model{} variants and the same-backbone pooled dense retriever, and additionally a long-context generative-retrieval baseline, \texttt{MSA-4B}~\citep{chen2026msa}; for ICR the ranked list is the beam search output. Unlike the main body, here we report standard Recall@$k$: the fraction of a query's gold documents that appear in the top $k$. Recall is inversely related to the number of golds per query --- a top-$k$ list can hold at most $k$ of them --- so with OBLIQ's many golds (a mean of ${\sim}9$--$13$ per query) the values are correspondingly small. For the $R^{\text{any}}_{19}$ ceiling, a gold counts as retrieved at $k$ if some head ranks it within its top $k$.

\begin{table}[h]
\centering
\small
\caption{OBLIQ dataset statistics for the evaluated subsets. Lengths are in Qwen3-0.6B tokens (mean over documents, and over queries that have a gold). Documents are truncated to $300$ tokens at prefill, which affects mainly Writing. Twitter and Writing corpora are gold-preserving subsamples of the full $72$k- and $10{,}388$-document corpora. % src: ~/datasets/obliq_\{math,twitter\_10k,writing\_4k\}
}
\label{tab:obliq-data}
\begin{tabular}{l r r r r}
\toprule
 & & & \multicolumn{2}{c}{Avg.\ tokens} \\
\cmidrule(lr){4-5}
Dataset & Documents & Queries & Doc & Query \\
\midrule
Math (analogue)       & $3{,}507$  & $151$ & $116$ & $181$ \\
Twitter (descriptive) & $10{,}000$ & $281$ & $52$  & $53$  \\
Writing (analogue)    & $4{,}000$  & $512$ & $498$ & $653$ \\
\bottomrule
\end{tabular}
\end{table}

\begin{figure}[h]
\centering
\small
\fbox{\begin{minipage}{0.95\linewidth}
\raggedright
\textbf{Math (analogues).} \emph{Instruction:} Given a mathematical problem statement, retrieve passages that present analogous problems or techniques.\\[2pt]
\emph{Query:} Let $(M,d)$ be a nonempty complete metric space and $S:M\to M$ with $S^2$ a strict contraction; show that $S$ has a unique fixed point. \dots\\[2pt]
\emph{Gold:} Let $U$ be a nonempty bounded open set in $\mathbb{R}^n$; \dots\ show that there is an affine transformation carrying $U$ to the unit ball.\\[6pt]
\textbf{Writing (analogues).} \emph{Instruction:} Given a passage of writing, retrieve other passages with analogous ideas, arguments, or stylistic techniques.\\[2pt]
\emph{Query:} ``We see some good news in alignment -- as models become more capable, they are also more aligned \dots\ but still far from the reliability required in high-stakes applications.''\\[2pt]
\emph{Gold:} ``Leonard Shelby, the protagonist of Christopher Nolan's film \emph{Memento}, suffers from anterograde amnesia \dots\ he uses notes, photos, and tattoos to communicate facts to his future self.''\\[6pt]
\textbf{Twitter (descriptive).} \emph{Instruction:} Given a description of a tweet, retrieve tweets that match the description.\\[2pt]
\emph{Query:} Find tweets where users implicitly mock feel-good branding and token measures from tech moguls and politicians while actual wars intensify.\\[2pt]
\emph{Gold:} ``@user\ `but we decided to send 8000 helmets to the Strait of Hormuz to help defend the free world'\,''
\end{minipage}}
\caption{Worked OBLIQ examples, one per task. Math and Writing are \emph{analogues} tasks (the gold shares an abstract pattern with the query despite different surface content); Twitter is a \emph{descriptive} task. Query and gold text are from the evaluated corpora, abbreviated; the Twitter handle is anonymized. % src: dianetc/OBLIQ-Bench
}
\label{fig:obliq-example}
\end{figure}

\begin{table}[h]
\centering
\small
\setlength{\tabcolsep}{4pt}
\caption{OBLIQ results: Recall@1 / Recall@5 (the fraction of a query's gold documents that appear in the top $k$; with many golds per query these values are small), with the OBLIQ self-match exclusion applied. The top row is the attention ceiling $R^{\text{any}}_{19}$ (\cref{sec:limitations}): the fraction of a query's golds that some head ranks within top-$k$ at $\mathrm{L}19$, by pre-softmax QK-MaxSim. \texttt{MSA-4B} emits ${\sim}1$ citation per query, so its R@5 is close to its R@1. On Twitter, $R^{\text{any}}_{19}$ surfaces nearly every gold within top-5 (R@5${\approx}1$) yet almost none at rank 1, because a fixed set of sink documents occupies the top positions for the relevant head. % src: testing/results/obliq_full_sweep/ [artifact:testing/results/obliq_full_sweep]
}
\label{tab:obliq}
\begin{tabular}{l l cc cc cc}
\toprule
 & & \multicolumn{2}{c}{Math ($N{=}3.5$k)} & \multicolumn{2}{c}{Twitter ($N{=}10$k)} & \multicolumn{2}{c}{Writing ($N{=}4$k)} \\
\cmidrule(lr){3-4}\cmidrule(lr){5-6}\cmidrule(lr){7-8}
Method & Scoring & R@1 & R@5 & R@1 & R@5 & R@1 & R@5 \\
\midrule
\model{} attention, $R^{\text{any}}_{19}$ & any-head MaxSim & $0.852$ & $0.866$ & $0.001$ & $1.000$ & $0.942$ & $0.998$ \\
\midrule
\model{}                       & ICR beam      & $0.000$ & $0.000$ & $0.000$ & $0.000$ & $0.002$ & $0.004$ \\
\model-sink                    & ICR beam      & $0.000$ & $0.001$ & $0.000$ & $0.001$ & $0.000$ & $0.003$ \\
\model-SSMax                   & ICR beam      & $0.000$ & $0.008$ & $0.001$ & $0.004$ & $0.000$ & $0.007$ \\
\model-SSMax-routing           & ICR beam      & $0.008$ & $0.032$ & $0.001$ & $0.003$ & $0.001$ & $0.006$ \\
\midrule
\texttt{Qwen3-dense}           & pooled cosine & $0.014$ & $0.057$ & $0.000$ & $0.004$ & $0.009$ & $0.025$ \\
\texttt{MSA-4B}                & gen.\ citations & $0.013$ & $0.027$ & $0.002$ & $0.008$ & $0.004$ & $0.008$ \\
\bottomrule
\end{tabular}
\end{table}

%%%%%%%%%%%%%%%%%%%%%%%%%%%%%%%%%%%%%%%%%%%%%%%%%%%%%%%%%%%%

%\newpage
%\input{checklist.tex}

\end{document}